\definecolor{lightblue}{rgb}{0.68, 0.85, 0.9}
\definecolor{top1}{rgb}{0.68, 0.85, 0.9}
\definecolor{top2}{rgb}{0.78, 0.93, 0.96}
\definecolor{top3}{rgb}{0.88, 0.97, 0.99}
\theoremstyle{plain}
\newtheorem{theorem}{Theorem}[section]
\newtheorem{proposition}[theorem]{Proposition}
\newtheorem{corollary}[theorem]{Corollary}
\theoremstyle{definition}
\theoremstyle{remark}
\icmltitlerunning{ReQFlow: Rectified Quaternion Flow for Efficient and High-Quality Protein Backbone Generation}
\begin{document}

\twocolumn[
\icmltitle{ReQFlow: Rectified Quaternion Flow for Efficient and High-Quality Protein Backbone Generation}




\begin{icmlauthorlist}
\icmlauthor{Angxiao Yue}{ruc_gl}
\icmlauthor{Zichong Wang}{ruc_s}
\icmlauthor{Hongteng Xu}{ruc_gl,lm,moe}
\end{icmlauthorlist}

\icmlaffiliation{ruc_gl}{Gaoling School of Artificial Intelligence, Renmin University of China, Beijing, China}
\icmlaffiliation{ruc_s}{School of Statistics, Renmin University of China, Beijing, China}
\icmlaffiliation{lm}{Beijing Key Laboratory of Research on Large Models and Intelligent Governance}
\icmlaffiliation{moe}{Engineering Research Center of Next-Generation Intelligent Search and Recommendation, MOE}

\icmlcorrespondingauthor{Hongteng Xu}{hongtengxu@ruc.edu.cn}

\icmlkeywords{Quaternion flow matching, rectified flow, spherical linear interpolation, protein backbone generation}

\vskip 0.3in
]



\printAffiliationsAndNotice{}  

\begin{abstract}
Protein backbone generation plays a central role in \emph{de novo} protein design and is significant for many biological and medical applications.
Although diffusion and flow-based generative models provide potential solutions to this challenging task, they often generate proteins with undesired designability and suffer computational inefficiency.
In this study, we propose a novel rectified quaternion flow (ReQFlow) matching method for fast and high-quality protein backbone generation. 
In particular, our method generates a local translation and a 3D rotation from random noise for each residue in a protein chain, which represents each 3D rotation as a unit quaternion and constructs its flow by spherical linear interpolation (SLERP) in an exponential format.
We train the model by quaternion flow (QFlow) matching with guaranteed numerical stability and rectify the QFlow model to accelerate its inference and improve the designability of generated protein backbones, leading to the proposed ReQFlow model. 
Experiments show that ReQFlow achieves on-par performance in protein backbone generation while requiring much fewer sampling steps and significantly less inference time (e.g., being 37$\times$ faster than RFDiffusion and 63$\times$ faster than Genie2 when generating a backbone of length 300), demonstrating its effectiveness and efficiency. Code is available at \url{https://github.com/AngxiaoYue/ReQFlow}.
\end{abstract}

\section{Introduction}
\emph{De novo} protein design~\cite{ingraham2023illuminating, lin2023generating} aims to design rational proteins from scratch with specific properties or functions, which has many biological and medical applications, such as developing novel enzymes for biocatalysis~\cite{kelly2020transaminases} and discovering new drugs for diseases~\cite{teague2003implications,silva2019novo}.
This task is challenging due to the extremely huge design space of proteins.
For simplifying the task, the mainstream \emph{de novo} protein design strategy takes protein backbone generation (i.e., generating 3D protein structures without side chains) as the key step that largely determines the rationality and basic properties of designed proteins. 

Focusing on protein backbone generation, many deep generative models, especially those diffusion and flow-based models~\cite{ho2020denoising, watson2023novo,lin2024out,yim2023se,yim2023fast,bose2023se,huguet2024sequence, lipman2022flow}, have been proposed as potential solutions. 
However, these models often generate protein backbones with poor designability (the key metric indicating the quality of generated protein backbones), especially for proteins with long residue chains. 
In addition, diffusion or flow models often require many sampling steps to generate protein backbones, resulting in high computational complexity and long inference time.
As a result, the above drawbacks on generation quality and computational efficiency limit these models in practical large-scale applications.

\begin{figure*}[t]
    \centering
    \begin{subfigure}[b]{0.68\textwidth}
    \centering
    \includegraphics[height=5.3cm]{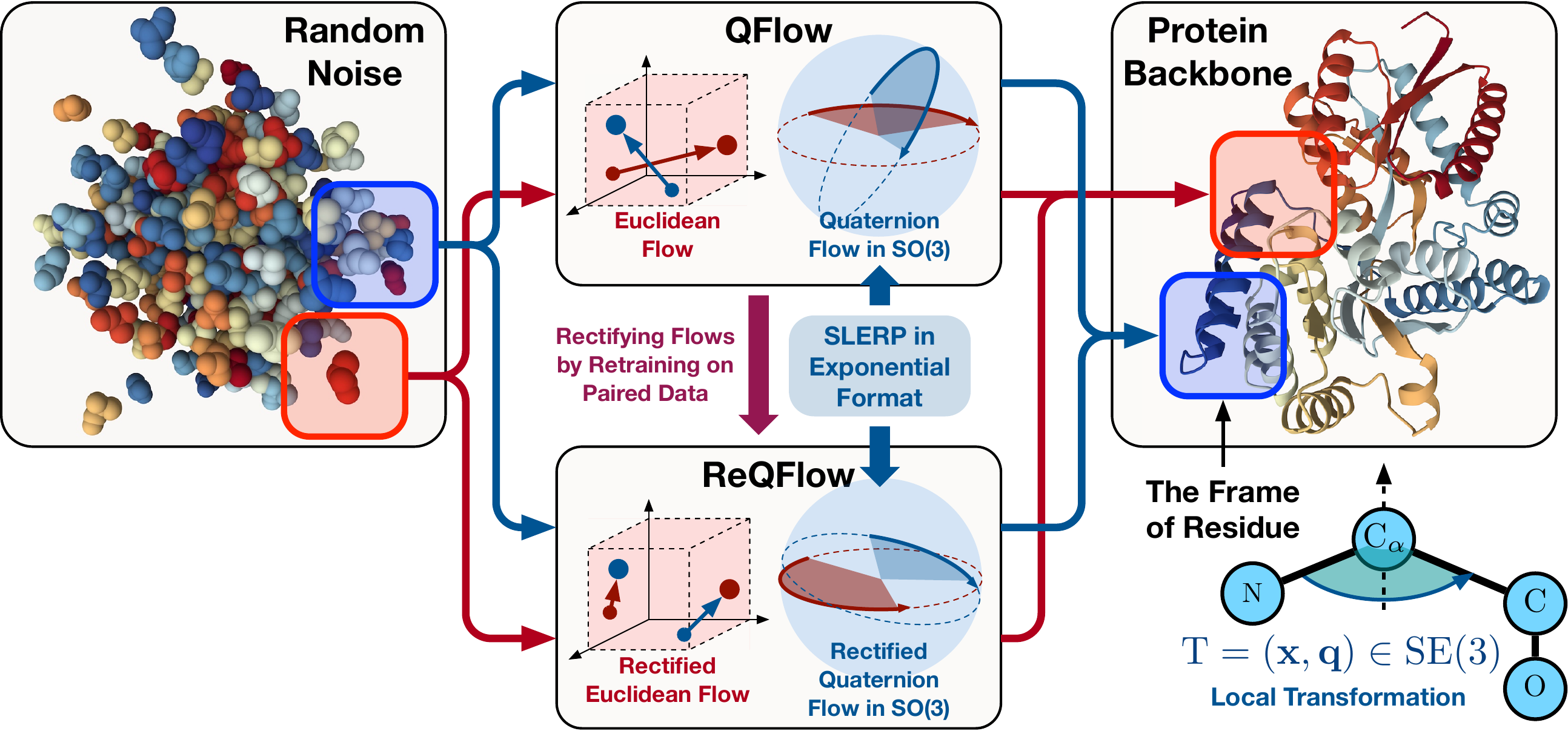}
    \subcaption{The scheme of rectified quaternion flow matching}
    \label{fig:scheme}
    \end{subfigure}
    \begin{subfigure}[b]{0.30\textwidth}
    \centering
    \includegraphics[height=5.3cm]{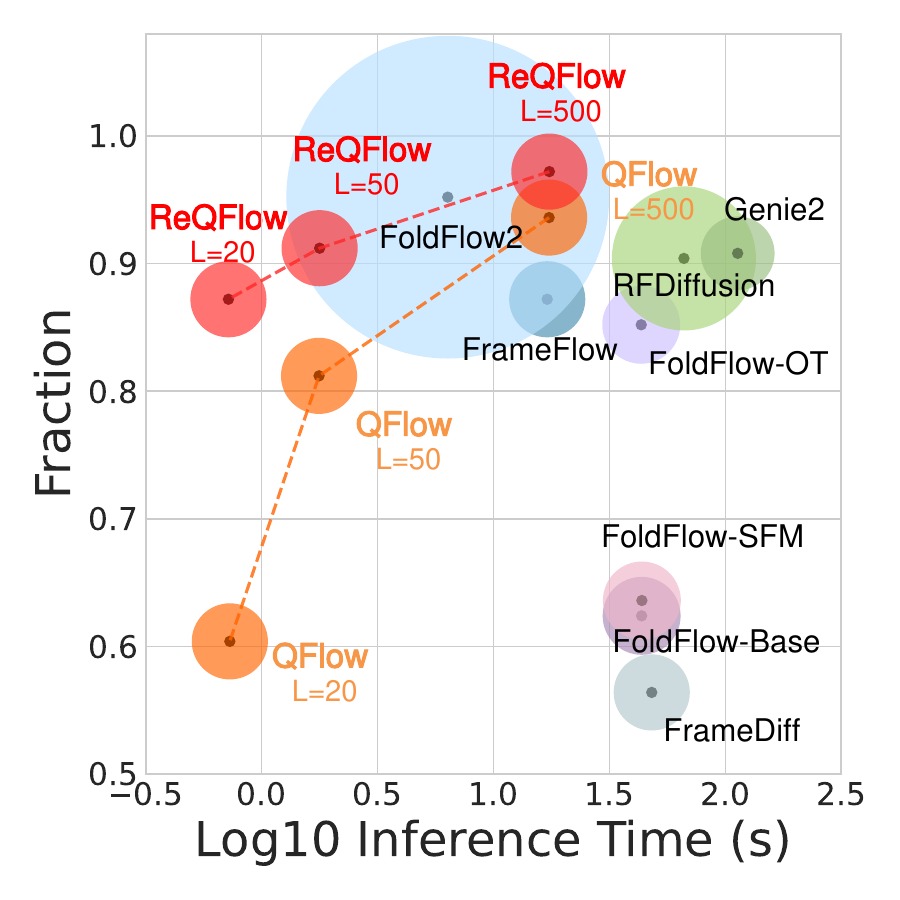}
    \subcaption{Comparisons for various methods}
    \label{fig:cmp}
    \end{subfigure}
    \caption{(a) An illustration of our rectified quaternion flow matching method, in which each residue is represented as a frame associated with a local transformation. 
    (b) For each method, the size of its circle indicates the model size, and the location of the circle's centroid indicates the logarithm of the average inference time when generating a protein backbone with length $N=300$ and the Fraction score of designable protein backbones.
    For QFlow and ReQFlow, we set the sampling step $L
    \in\{20, 50, 500\}$, respectively.}
\end{figure*}

To overcome the above challenges, we propose a novel rectified quaternion flow (ReQFlow) matching method, achieving fast and high-quality protein backbone generation. 
As illustrated in Figure~\ref{fig:scheme}, our method learns a model to generate a local 3D translation and a 3D rotation respectively from random noise for each residue in a protein chain. 
Different from existing models, the proposed model represents each rotation as a unit quaternion and constructs its quaternion flow in $\text{SO}(3)$ by spherical linear interpolation (SLERP) in an exponential format~\cite{sola2017quaternion}, which can be learned by our quaternion flow (QFlow) matching strategy.
Furthermore, given a trained QFlow model, we leverage the rectified flow technique in~\cite{liu2022rectified}, re-training the model based on the paired noise and protein backbones generated by the model itself.
The rectified QFlow (i.e., ReQFlow) model leads to non-crossing sampling paths in $\mathbb{R}^3$ and $\text{SO}(3)$, respectively, when generating translations and rotations. 
As a result, we can apply fewer sampling steps to accelerate the generation process significantly.

We demonstrate the rationality and effectiveness of ReQFlow compared to existing diffusion and flow-based methods.
In particular, thanks to the exponential format SLERP, ReQFlow is learned and implemented with guaranteed numerical stability and computational efficiency, especially when the rotation angle is close to $0$ or $\pi$. 
Experimental results demonstrate that ReQFlow achieves competitive performance, generating high-quality protein backbones with significantly reduced inference time. 
Furthermore, ReQFlow consistently maintains effectiveness and efficiency in generating long-chain protein backbones (e.g., the protein backbones with over 500 residues), where all baseline models suffer severe performance degradation.
As shown in Figure~\ref{fig:cmp}, ReQFlow outperforms existing methods and generates high-quality protein backbones, whose designability Fraction score is 0.972 when sampling 500 steps and 0.912 when merely sampling 50 steps.

\section{Related Work and Preliminaries}
\subsection{Protein Backbone Generation}
Many diffusion and flow-based methods have been proposed to generate protein backbones. 
These methods often parameterize protein backbones like AlphaFold2~\cite{jumper2021highly} does, representing each protein's residues as a set of $\text{SE}(3)$ frames~\cite{yim2023se,yim2023fast, wang2025polyconf}.
Accordingly, FrameDiff~\cite{yim2023se} generates protein backbones by two independent diffusion processes, generating the corresponding frames' local translations and rotations, respectively.
Following the same framework, flow-based methods like FrameFlow~\cite{yim2023fast} and FoldFlow~\cite{bose2023se} replace the stochastic diffusion processes with deterministic flows. 

For the above methods, many efforts have been made to modify their model architectures and improve data representations, e.g., the Clifford frame attention module in GAFL~\cite{wagner2024generating} and the asymmetric protein representation module in Genie~\cite{lin2023generating} and Genie2~\cite{lin2024out}. 
In addition, some methods leverage large-scale pre-trained models to improve generation quality.
For example, RFDiffusion~\cite{watson2023novo} utilizes the pre-trained RoseTTAFold~\cite{baek2021accurate} as the backbone model.
FoldFlow2~\cite{huguet2024sequence} improves FoldFlow by using a protein large language model for residue sequence encoding.
Taking scaling further and adopting a different architectural approach, Prote\'{i}na~\cite{geffner2025proteina} developed a large-scale, flow-based generative model using a non-equivariant transformer operating directly on C-alpha coordinates.

Currently, the above methods often suffer the conflict on computational efficiency and generation quality.
The state-of-the-art methods like RFDiffusion~\cite{watson2023novo} and Genie2~\cite{lin2024out} need long inference time to generate protein backbones with reasonable quality.
FrameFlow~\cite{yim2023fast} and GAFL~\cite{wagner2024generating} significantly improve inference speed while lag behind RFDiffusion and Genie2 in protein backbone quality. 
Moreover, these methods suffer severe performance degradation when generating long-chain protein backbones. 
These limitations motivate us to develop the proposed ReQFlow, improving the current flow-based methods and generating protein backbones efficiently with satisfactory designability.

\subsection{Quaternion Algebra and Its Applications}
The proposed ReQFlow is designed based on quaternion algebra~\cite{dam1998quaternions,zhu2018quaternion}.
Mathematically, quaternion is an extension of complex numbers into four-dimensional space, consists of one real component and three orthogonal imaginary components. 
A quaternion is formally expressed as $q = s + x\texttt{i} + y\texttt{j} + z\texttt{k} \in \mathbb{H}$, where $\mathbb{H}$ denotes the quaternion domain, and $s, x, y, z \in \mathbb{R}$. 
The imaginary components $\{\texttt{i}, \texttt{j}, \texttt{k}\}$ satisfy $\texttt{i}^2 = \texttt{j}^2 = \texttt{k}^2 = \texttt{ijk} = -1$. 
Each $q\in\mathbb{H}$ can be equivalently represented as a vector $\bm{q} = [s, \bm{u}^\top]^\top\in\mathbb{R}^4$, where $\bm{u}^\top = [x, y, z]^\top$. 
Given $\bm{q}_1 = [s_1, \bm{u}_1^\top]^\top$ and $\bm{q}_2 = [s_2, \bm{u}_2^\top]^\top$, their multiplication is achieved by Hamilton product, i.e., 
\begin{equation}\label{eq:hamilton}
\bm{q}_1 \otimes \bm{q}_2 = 
\begin{bmatrix} 
s_1s_2 - \bm{u}_1^{\top}\bm{u}_2 \\ 
s_1\bm{u}_2 + s_2\bm{u}_1 + \bm{u}_1 \times \bm{u}_2 
\end{bmatrix},
\end{equation}
where $\times$ denotes the cross product. 

Quaternion is a powerful tool to describe 3D rotations. 
For a 3D rotation in the axis-angle formulation, i.e., $\bm{\omega} = \phi \bm{u} \in \mathbb{R}^3$, where the unit vector $\bm{u}$ and the scalar $\phi$ denote the rotation axis and angle, respectively, we can convert it to a unit quaternion by an exponential map~\cite{sola2017quaternion}:
\begin{equation}\label{eq:exp_map}
    \bm{q} = \exp\Bigl(\frac{1}{2} \bm{\omega}\Bigr) = \Bigl[\cos\frac{\phi}{2}, \sin\frac{\phi}{2}\bm{u}^\top\Bigr]^\top\in\mathbb{S}^3,
\end{equation}
where $\mathbb{S}^3=\{\bm{q}\in\mathbb{R}^4~|~\|\bm{q}\|_2=1\}$ is the 4D hypersphere.
The conversion from a unit quaternion to an angle-axis representation is achieved by a logarithmic map:
\begin{equation}\label{eq:log_map}
    \bm{\omega} = 2\log(\bm{q}).
\end{equation}
Suppose that we rotate a point $\bm{v}_1\in\mathbb{R}^3$ to $\bm{v}_2$ by $\bm{\omega}$, we can equivalently implement the operation by
\begin{equation}\label{eq:rot_q}
    \bm{v}_2 = \text{Im}(\bm{q} \otimes [0, \bm{v}_1^\top]^\top \otimes \bm{q}^{-1}),
\end{equation}
where $\bm{q}^{-1}=[\cos\frac{\phi}{2}, -\sin\frac{\phi}{2}\bm{u}^\top]^{\top}$ is the inverse of $\bm{q}$ and ``$\text{Im}(\cdot)$'' denotes the imaginary components of a quaternion (i.e., the last three elements of the corresponding 4D vector).
The quaternion-based rotation representation in Eq.~\eqref{eq:rot_q} offers several advantages, including compactness, computational efficiency, and avoidance of gimbal lock~\cite{hemingway2018perspectives}, which has been widely used in skeletal animation~\cite{shoemake1985animating}, robotics~\cite{pervin1982quaternions}, and virtual reality~\cite{kuipers1999quaternions}.

Besides computer graphics, some quaternion-based machine learning models have been proposed for other tasks, e.g., image processing~\cite{xu2015vector,zhu2018quaternion} and structured data (e.g., graphs and point clouds) analysis~\cite{zhang2020quaternion,zhao2020quaternion}.
Recently, some quaternion-based models have been developed for scientific problems, e.g., the quaternion message passing~\cite{yue2024plug} for molecular conformation representation~\cite{gasteiger2020directional,wang2023mperformer} and the quaternion generative models for molecule generation~\cite{kohler2023rigid,guo2025assembleflow}.
However, the computational quaternion techniques are seldom considered in protein-related tasks.
Our work fill this blank, demonstrating the usefulness of quaternion algebra in protein backbone generation.

\section{Proposed Method}

\subsection{Protein Backbone Parameterization}\label{sec:backbone}
\label{sec:parameterization}
We parameterize the protein backbone following~\cite{jumper2021highly,yim2023se,yim2023fast,bose2023se}. 
As illustrated in Figure~\ref{fig:scheme}, each residue is represented as a frame, where the frame encodes a rigid transformation starting from the idealized coordinates of four heavy atoms: $[\mathrm{N}^{\ast}, \mathrm{C}_\alpha^{\ast}, \mathrm{C}^{\ast},\mathrm{O}^{\ast}] \in \mathbb{R}^{3\times 4}$. 
In this representation, $\mathrm{C}_\alpha^{\ast} = [0, 0, 0]^{\top}$ is placed at the origin, and the transformation incorporates experimental bond angles and lengths~\cite{engh2012structure}. 
We can derive each residue's frame by 
\begin{equation}\label{eq:frame}
    [\mathrm{N}^{i}, \mathrm{C}_\alpha^{i}, \mathrm{C}^{i}, \mathrm{O}^{i}] = \mathrm{T}^{i} \circ [\mathrm{N}^{\ast}, \mathrm{C}_\alpha^{\ast}, \mathrm{C}^{\ast}, \mathrm{O}^{\ast}],
\end{equation}
where $\mathrm{T}^{i} \in \text{SE}(3)$ is the local orientation-preserving rigid transformation mapping the idealized frame to the frame of the $i$-th residue. 
In this study, we represent $\mathrm{T}^i=(\bm{x}^i, \bm{q}^i)$, where $\bm{x}^i \in \mathbb{R}^3$ represents the 3D translation and a unit quaternion $\bm{q}^i \in \mathbb{S}^3$, which double-covers $\text{SO}(3)$, represents a 3D rotation. 
According to Eq.~\eqref{eq:rot_q}, the action of $\mathrm{T}^i$ on a coordinate $\bm{v}\in\mathbb{R}^3$ can be implemented as
\begin{equation}
    \mathrm{T}^i \circ \bm{v} = \bm{x}^i + \text{Im}(\bm{q} \otimes [0, \bm{v}^\top]^\top \otimes \bm{q}^{-1}).
\end{equation}
Note that, for protein backbone generation, we can use the planar geometry of backbone to impute the coordinate of the oxygen atom $\mathrm{O}^{i}$~\cite{yim2023fast,watson2023novo}, so we do not need to parameterize the rotation angle of the bond ``$\mathrm{C}_\alpha-\mathrm{C}$''.
As a result, for a protein backbone with $N$ residues, we have a collection of $N$ frames, resulting in the parametrization set $\Theta = \{\mathrm{T}^{i}\}_{i=1}^{N}$.
Therefore, we can formulate the protein backbone generation problem as modeling and generating $\{\mathrm{T}^{i}\}_{i=1}^{N}$ automatically.

\subsection{Quaternion Flow Matching}\label{sec:qflow}

We decouple the translation and rotation of each frame, establishing two independent flows in $\mathbb{R}^3$ and $\text{SO}(3)$, respectively. 
Without the loss of generality, we define these two flows in the time interval $[0, 1]$. 
When $t=0$, we sample the starting points of the flows as random noise, i.e., $\mathrm{T}_0=(\bm{x}_0,\bm{q}_0)\sim \mathcal{T}_0\times \mathcal{Q}_0$, where $\mathcal{T}_{0}=\mathcal{N}(\mathbf{0},\mathbf{I}_3)$ is the Gaussian distribution for translations, and $\mathcal{Q}_0=\mathcal{IG}_{\text{SO}(3)}$ is the isotropic Gaussian distribution on $\text{SO}(3)$ for rotations~\cite{leach2022denoising}, corresponding to uniformly sampling rotation axis $\bm{u}\in\mathbb{S}^2$ and rotation angle $\phi \in [0, \pi]$ with the density:
\begin{equation*}
    f(\phi) = \frac{1 - \cos \phi}{\pi} \sum_{l=0}^{\infty} (2l + 1) e^{-l(l+1)\epsilon^2} \frac{\sin((l + \frac{1}{2})\phi)}{\sin(\phi/2)}.
\end{equation*}
Based on Eq.~\eqref{eq:exp_map}, we convert the sampled axis and angle to $\bm{q}_0$. 
When $t=1$, the ending points of these two flows, denoted as $\mathrm{T}_1=(\bm{x}_1,\bm{q}_1)$, should be the transformation of a frame.
We denote the data distribution of $\mathrm{T}_1$ as $\mathcal{T}_1\times\mathcal{Q}_1$.

\textbf{Linear Interpolation of Translation.} 
For $\bm{x}_0 \sim \mathcal{T}_0$ and $\bm{x}_1 \sim \mathcal{T}_1$, we can interpolate the trajectory between them linearly: for $t\in [0, 1]$, 
\begin{eqnarray}\label{eq:linear}
\begin{aligned}
    &\bm{x}_t = (1-t)\bm{x}_0 + t \bm{x}_1,\\
    &\text{with constant translation velocity:~}\bm{v} = \bm{x}_1 - \bm{x}_0.
\end{aligned}
\end{eqnarray}

\textbf{SLERP of Rotation in Exponential Format.} 
For unit quaternions $\bm{q}_0 \sim \mathcal{Q}_0$ and $\bm{q}_1 \sim \mathcal{Q}_1$, we interpolate the trajectory between them via SLERP in an exponential format~\cite{sola2017quaternion}:
\begin{equation}\label{eq:slerp_e}
\begin{aligned}
    &\bm{q}_t = \bm{q}_0 \otimes \exp(t \log(\bm{q}_0^{-1} \otimes \bm{q}_1)),\\ &\text{with constant angular velocity:~}\bm{\omega} = \phi\bm{u}.
\end{aligned}
\end{equation}
Here, $\bm{q}_0^{-1} \otimes\bm{q}_1=[\cos\left(\phi/2\right), \sin\left(\phi/2\right)\bm{u}^{\top}]^{\top}$ and $\bm{\omega}=2\log(\bm{q}_0^{-1} \otimes \bm{q}_1)$. 
$\exp(\cdot)$ and $\log(\cdot)$ are exponential and logarithmic maps defined in Eq.~\eqref{eq:exp_map} and Eq.~\eqref{eq:log_map}, respectively.

\textbf{Training QFlow Model.}
In this study, we adopt the $\text{SE}(3)$-equivariant neural network in FrameFlow~\cite{yim2023fast}, denoted as $\mathcal{M}_{\theta}$, to model the flows.
Given the transformation at time $t$, i.e., $\mathrm{T}_t$, the model predicts the transformation at $t=1$: 
\begin{equation}\label{eq:model}
\mathrm{T}_{\theta,1}=(\bm{x}_{\theta, 1}, \bm{q}_{\theta,1}) = \mathcal{M}_{\theta}(\mathrm{T}_t, t).
\end{equation}

We train this model by the proposed quaternion flow (QFlow) matching method.
In particular, given the frame $\mathrm{T}_1=(\bm{x}_1,\bm{q}_1)$, we first sample a timestamp $t\sim\text{Uniform}([0,1])$ and random initial points $\mathrm{T}_0=(\bm{x}_0,\bm{q}_0)\sim\mathcal{T}_0\times\mathcal{Q}_0$.
Then, we derive $(\bm{x}_t,\bm{v})$ and $(\bm{q}_t,\bm{\omega})$ via Eq.~\eqref{eq:linear} and Eq.~\eqref{eq:slerp_e}, respectively.
Passing $(\bm{x}_t,\bm{q}_t,t)$ through the model $\mathcal{M}_{\theta}$, we obtain $\bm{x}_{\theta,1}$ and $\bm{q}_{\theta,1}$, and derive the translation and angular velocities at time $t$ by 
\begin{equation}\label{eq:velocity}
    \bm{v}_{\theta, t} = \frac{\bm{x}_{\theta, 1} - \bm{x}_t}{1 - t}, \quad \bm{\omega}_{\theta, t} = \frac{2\log(\bm{q}_t^{-1} \otimes \bm{q}_{\theta,1})}{1-t}.
\end{equation}

Based on the constancy of the velocities, we train the model $\mathcal{M}_{\theta}$ by minimizing the following two objectives:
\begin{eqnarray}\label{eq:objs}
\begin{aligned}
\mathcal{L}_{\mathbb{R}^3} &=  \mathbb{E}_{t,\mathcal{T}_0, \mathcal{T}_1} 
[ \| \bm{v} - \bm{v}_{\theta, t} \|^2],\\
\mathcal{L}_{\text{SO}(3)} &= \mathbb{E}_{t,\mathcal{Q}_0, \mathcal{Q}_1} 
[ \| \bm{\omega} - \bm{\omega}_{\theta, t} \|^2 ].    
\end{aligned}   
\end{eqnarray}
Besides the above MSE losses, we further consider the auxiliary loss proposed in~\cite{yim2023se}, which discourages physical violations, e.g., chain breaks or steric clashes.
Therefore, we train the model by 
\begin{equation}
\label{eq:loss}
    \sideset{}{_{\theta}}\min \mathcal{L}_{\mathbb{R}^3} + \mathcal{L}_{\text{SO}(3)} + \alpha \cdot \mathbf{1}\{t < \zeta\}\cdot \mathcal{L}_{\text{aux}},
\end{equation}
where $\alpha\geq 0$ is the weight of the auxiliary loss, $\mathbf{1}$ is an indicator, signifying that the auxiliary loss is applied only when $t$ is sampled below a predefined threshold $\zeta$.

\textbf{Inference Based on QFlow.} 
Given a trained model, we can generate frames of residues from noise with the predicted velocities.
In particular, given initial $(\bm{x}_0,\bm{q}_0)\sim\mathcal{T}_0\times\mathcal{Q}_0$, the translation is generated by an Euler solver with $L$ steps: 
\begin{equation}\label{eq:euler_trans}
    \bm{x}_{t+\Delta t} = \bm{x}_t + \bm{v}_{\theta, t} \cdot \Delta t,
\end{equation}
where the step size $\Delta t=\frac{1}{L}$. 
The quaternion of rotation is generated with an exponential step size scheduler: We modify Eq.~\eqref{eq:slerp_e}, interpolating $\bm{q}_t$ with an acceleration as
\begin{equation}
    \bm{q}_t = \bm{q}_0 \otimes \exp( (1- e^{-\gamma t}) \log(\bm{q}_0^{-1} \otimes \bm{q}_1)),
\end{equation}
where $\gamma$ controls the rotation accelerating, and we empirically set $\gamma = 10$. 
Then, the Euler solver becomes:
\begin{equation}\label{eq:infer_slerp}
    \bm{q}_{t + \Delta t} = \bm{q}_{t} \otimes \exp\Bigl(\frac{1}{2}\Delta t \cdot \gamma e^{-\gamma t} \bm{\omega}_{\theta, t}\Bigr),
\end{equation}
where $\gamma e^{-\gamma t} \bm{\omega}_{\theta, t}$ is the adjusted angular velocity.
Previous works~\cite{bose2023se,yim2023fast} have demonstrated that the exponential step size scheduler helps reduce sampling steps and enhance model performance. 

\begin{figure*}[t]
  \centering
  \begin{subfigure}[b]{0.32\textwidth}
    \centering
    \includegraphics[height=4cm]{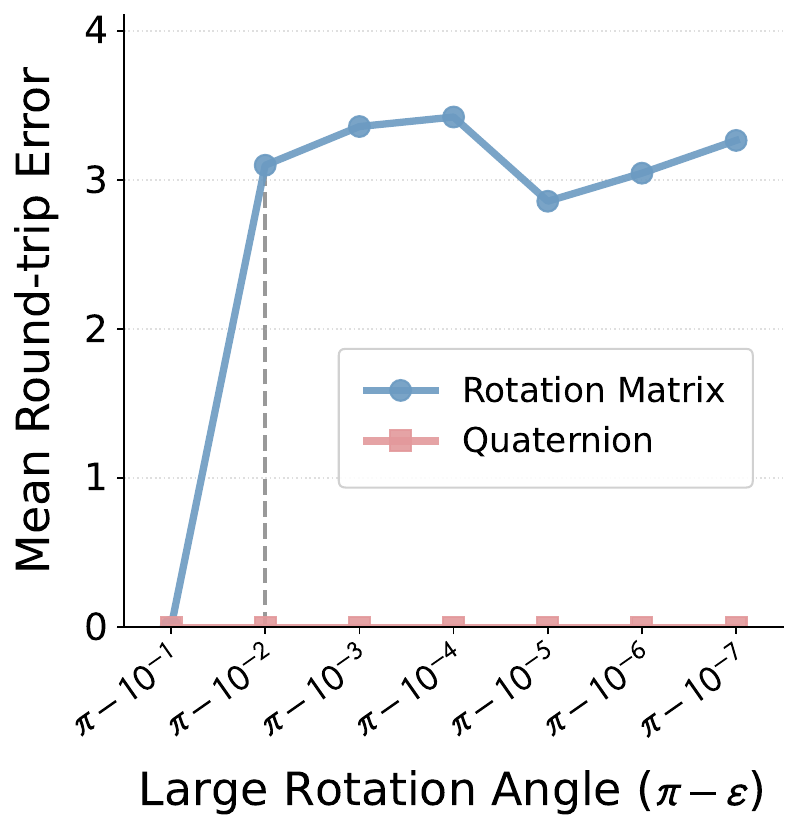}
    \subcaption{}
    \label{fig:rotation_error}
  \end{subfigure}
  \begin{subfigure}[b]{0.32\textwidth}
    \centering
    \includegraphics[height=4cm]{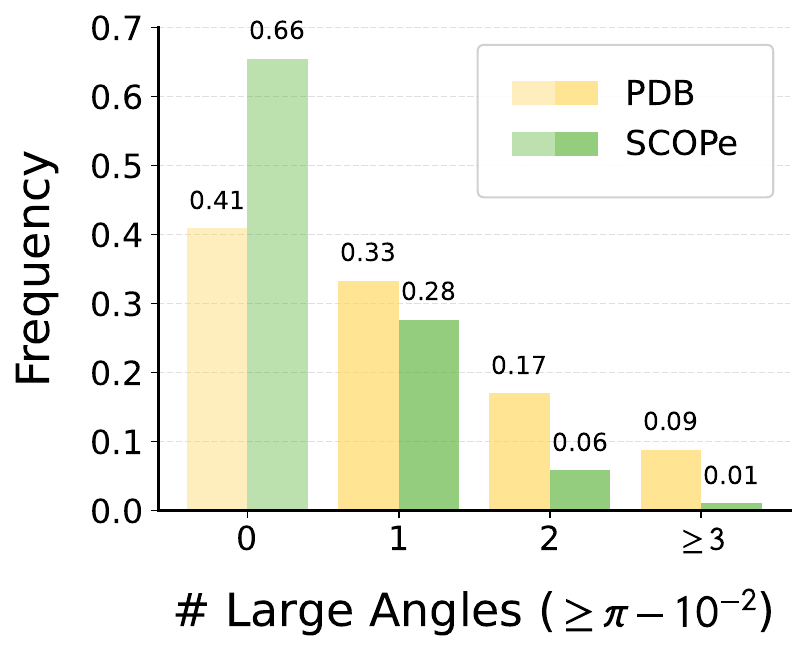}
    \subcaption{}
    \label{fig:dataset_large_angles}
  \end{subfigure}
  \begin{subfigure}[b]{0.32\textwidth}
    \centering
    \includegraphics[height=4cm]{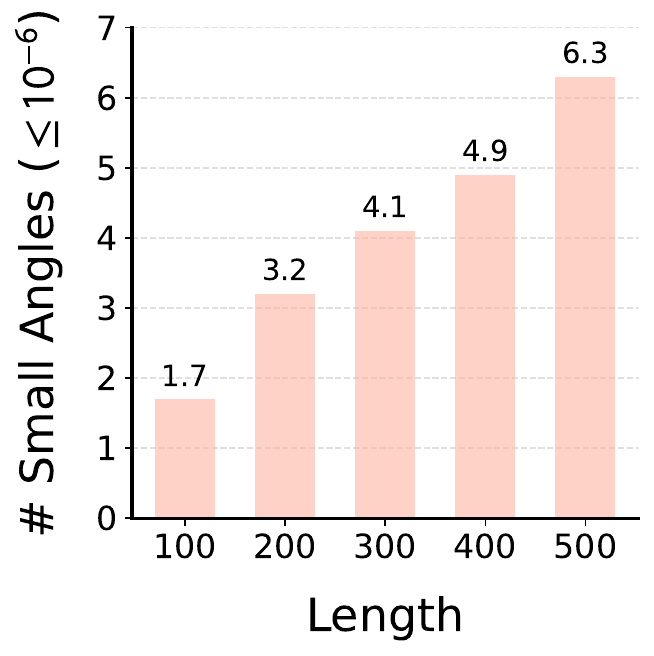}
    \subcaption{}
    \label{fig:small_angles}
  \end{subfigure}
  \caption{(a) Mean round-trip errors from $\pi - 10^{-1}$ to $\pi - 10^{-7}$. 
  (b) The frequency of suffering large rotation angles per protein when training on the two datasets. 
  (c) The average number of small rotation angles per protein when generating ten backbones for each length.}
  \label{fig:main}
\end{figure*}

\begin{table*}[t]
    \centering
    \caption{Comparisons for various rotation interpolation methods.}
    \label{tb:rot_comparison}
    \footnotesize 
    \setlength{\tabcolsep}{3pt}
    \begin{tabular}{@{}c|c|ccc@{}}
        \toprule
        \multicolumn{2}{c|}{Method}  & \textbf{Matrix Geodesic} & \textbf{SLERP (Add. Format)} & \textbf{SLERP (Exp. Format)} \\ 
        \midrule
        
        \multirow{2}{*}{\makecell[c]{\textbf{Interpolation}}} 
        & Formula & 
        $\bm{R}_0 \exp_M\left(t \log_M(\bm{R}_0^\top \bm{R}_1)\right)$ & 
        $\frac{\sin((1\!-\!t)\frac{\phi}{2})}{\sin\frac{\phi}{2}}\bm{q}_0 \!+\! \frac{\sin(t\frac{\phi}{2})}{\sin\frac{\phi}{2}}\bm{q}_1$ & 
        $\bm{q}_0 \otimes \exp(t\log(\bm{q}_0^{-1}\!\otimes\!\bm{q}_1))$ \\ 
        \cmidrule(lr){2-5}
        
        & Velocity & 
        $\bm{\Omega}=\log_M(\bm{R}_0^\top \bm{R}_1)$ & 
        $\bm{\eta}_t= \frac{\phi(\cos( \frac{t\phi}{2}) \bm{q}_1-\cos((1 - t) \frac{\phi}{2}) \bm{q}_0)}{2\sin\frac{\phi}{2}}$ & 
        $\bm{\omega}=2\log(\bm{q}_0^{-1}\!\otimes\!\bm{q}_1)$ \\
        \midrule
        
        \multirow{2}{*}{\makecell[c]{\textbf{Euler}\\ \textbf{Solver}}} 
        & Update & 
        $\bm{R}_{t+\Delta t} = \bm{R}_t \exp_M(\Delta t\cdot\bm{\Omega})$ & 
        $\bm{q}_{t+\Delta t} = \bm{q}_t + \Delta t\cdot \bm{\eta}_t$ & 
        $\bm{q}_{t+\Delta t} = \bm{q}_t \otimes \exp(\frac{1}{2}\Delta t \cdot\bm{\omega})$ \\
        \cmidrule(lr){2-5}
        & No Renomalization & \checkmark & \ding{55} & \checkmark \\
        \midrule
        
        \multirow{2}{*}{\makecell[c]{\textbf{Numerical}\\ \textbf{Stability}}} 
        & $\phi \geq \pi - 10^{-2}$ & \ding{55} & \checkmark  & \checkmark  \\
        \cmidrule(lr){2-5}
        & $\phi \leq 10^{-6}$ & \checkmark & \ding{55} & \checkmark \\
        \midrule
        
        \multicolumn{2}{c|}{\textbf{Application Scenarios}} &  
        \makecell[c]{\footnotesize FrameFlow~\cite{yim2023fast},\\ \footnotesize FoldFlow~\cite{bose2023se}} &  
        \makecell[c]{\footnotesize AssembleFlow\\
        \cite{guo2025assembleflow}} & 
        \makecell[c]{\footnotesize QFlow (\textbf{Ours}),\\ \footnotesize ReQFlow (\textbf{Ours})} \\
        \bottomrule
    \end{tabular}
\end{table*}

\subsection{Rectified Quaternion Flow}\label{sec:reflow}

Given the trained QFlow model $\mathcal{M}_{\theta}$, we can rewire the flows in $\mathbb{R}^3$ and $\text{SO}(3)$, respectively, with a non-crossing manner by the flow rectification method in~\cite{liu2022rectified}. 
In particular, we generate noisy $\mathrm{T}_{0}^{\prime} = \{\bm{x}_0^{\prime},\bm{q}_0^{\prime}\} \sim \mathcal{T}_0\times\mathcal{Q}_0$ and transfer to $\mathrm{T}_1^{\prime} = \{\bm{x}_1^{\prime},\bm{q}_1^{\prime}\}\sim\mathcal{T}_1\times\mathcal{Q}_1$ by $\mathcal{M}_{\theta}$. 
Taking $\mathcal{M}_{\theta}$ as the initialization, we use the noise-sample pairs, i.e., $\{\mathrm{T}_0^{\prime}, \mathrm{T}_1^{\prime}\}$, to train the model further by the same loss in Eq.~\eqref{eq:loss} and derive the rectified QFlow (ReQFlow) model.

The work in~\cite{liu2022rectified} has demonstrated that the rectified flow of translation in $\mathbb{R}^3$ preserves the marginal law of the original translation flow and reduces the transport cost from the noise to the samples.
We find that these theoretical properties are also held by the rectified quaternion flow under mild assumptions.
Let $(\bm{q}_0,\bm{q}_1)\sim\mathcal{Q}_0\times\mathcal{Q}_1$ be the pair used to train QFlow, and $(\bm{q}_0^{\prime},\bm{q}_1^{\prime})$ be the pair induced from $(\bm{q}_0,\bm{q}_1)$ by flow rectification.
Then, we have
\begin{theorem}
\label{theo:marginal}
(\textbf{Marginal preserving property}). The pair $(\bm{q}_0^{\prime}, \bm{q}_1^{\prime})$ is a coupling of $\mathcal{Q}_0$ and $\mathcal{Q}_1$. The marginal law of $\bm{q}_t^{\prime}$ equals that of $\bm{q}_t$ at everytime, that is $\text{Law}(\bm{q}_t^{\prime}) = \text{Law}(\bm{q}_t)$.
\end{theorem}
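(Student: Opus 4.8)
The plan is to establish the stronger claim that $\text{Law}(\bm{q}_t^{\prime}) = \text{Law}(\bm{q}_t)$ for \emph{every} $t\in[0,1]$; the coupling assertion then comes for free, since at $t=0$ both laws equal $\mathcal{Q}_0$ (flow rectification reuses the initial noise, so $\bm{q}_0^{\prime}=\bm{q}_0$), and evaluating the equality at $t=1$ gives $\bm{q}_1^{\prime}\sim\mathcal{Q}_1$. Throughout I would work on $\mathbb{S}^3$ with its round Riemannian metric, fixing once and for all the lift of each rotation to the hemisphere with nonnegative scalar component so that the $\text{SO}(3)$ double cover is resolved consistently; the exceptional configurations where the $\exp/\log$ maps of Eq.~\ref{eq:exp_map}--Eq.~\ref{eq:log_map} are non-smooth (antipodal quaternions, rotation angle $\phi=\pi$) form a $\mathcal{Q}_0\times\mathcal{Q}_1$-null set and can be discarded.

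First I would differentiate the SLERP path. For a fixed pair $(\bm{q}_0,\bm{q}_1)$, Eq.~\ref{eq:slerp_e} gives $\dot{\bm{q}}_t = \tfrac12\,\bm{q}_t\otimes[0,\bm{\omega}^{\top}]^{\top}$ with the \emph{time-constant} $\bm{\omega}=2\log(\bm{q}_0^{-1}\otimes\bm{q}_1)$, a tangent vector to $\mathbb{S}^3$ with $\|\bm{\omega}\|\le\pi$; this is exactly the angular velocity that QFlow matching regresses onto (cf.\ Eq.~\ref{eq:velocity}). Setting $\bar{\bm{v}}_t(\bm{q}):=\mathbb{E}\!\left[\dot{\bm{q}}_t\mid \bm{q}_t=\bm{q}\right]$—the velocity field that the optimally trained model $\mathcal{M}_{\theta}$ is assumed to realize—the tower property yields, for any smooth test function $\varphi$ on $\mathbb{S}^3$,
\[
\frac{d}{dt}\,\mathbb{E}\big[\varphi(\bm{q}_t)\big]
 = \mathbb{E}\big[\langle\nabla\varphi(\bm{q}_t),\dot{\bm{q}}_t\rangle\big]
 = \mathbb{E}\big[\langle\nabla\varphi(\bm{q}_t),\bar{\bm{v}}_t(\bm{q}_t)\rangle\big].
\]
This is precisely the weak form of the continuity equation $\partial_t\rho_t+\mathrm{div}(\rho_t\bar{\bm{v}}_t)=0$ on $\mathbb{S}^3$ for $\rho_t:=\text{Law}(\bm{q}_t)$, with initial datum $\rho_0=\mathcal{Q}_0$.

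Next, by construction the rectified quaternion flow solves the ODE $\dot{\bm{q}}_t^{\prime}=\bar{\bm{v}}_t(\bm{q}_t^{\prime})$ from $\bm{q}_0^{\prime}=\bm{q}_0\sim\mathcal{Q}_0$ (this is Eq.~\ref{eq:infer_slerp} with the trained velocity, in the continuous-time limit). A routine pushforward argument shows that its marginals $\rho_t^{\prime}:=\text{Law}(\bm{q}_t^{\prime})$ solve the \emph{same} continuity equation with the \emph{same} initial datum. I would then close the proof with a uniqueness theorem: under the mild assumption that $\bar{\bm{v}}_t$ is (uniformly in $t$) bounded and, say, locally Lipschitz on $\mathbb{S}^3$—or more generally satisfies a DiPerna--Lions-type Sobolev/integrability condition—the continuity equation on the compact manifold $\mathbb{S}^3$ has a unique weak solution for a prescribed initial law, hence $\rho_t^{\prime}=\rho_t$ for all $t$. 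The analogous statement on $\text{SO}(3)$ follows by transporting the argument through the local isometry $\mathbb{S}^3\to\text{SO}(3)$.

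The step I expect to be the real obstacle is not the transport-PDE machinery, which is classical once set up, but (i) pinning down the regularity of the conditional-expectation field $\bar{\bm{v}}_t$ that legitimizes the uniqueness step, and (ii) controlling the non-smoothness of SLERP along the antipodal/$\phi=\pi$ locus. For (ii) I would use that this locus is almost surely avoided and that $\dot{\bm{q}}_t$ is bounded and tangent to $\mathbb{S}^3$ on its complement, so the weak formulation is untouched; (i) is exactly where the ``mild assumptions'' of the statement are spent, and I would make them explicit (integrability of $\dot{\bm{q}}_t$ against $\rho_t$ together with enough smoothness of $\bar{\bm{v}}_t$ for a manifold version of DiPerna--Lions uniqueness), mirroring the Euclidean argument of~\cite{liu2022rectified}.
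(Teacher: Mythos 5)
Your proposal follows essentially the same route as the paper's proof: differentiate $\mathbb{E}[h(\bm{q}_t)]$ against smooth test functions, use the tower property to replace $\bm{\omega}_t$ by its conditional expectation, recognize the weak continuity equation on the compact manifold, observe that the rectified marginals solve the same equation from the same initial law, and conclude by uniqueness. The only difference is that you make explicit the regularity/integrability hypotheses (Lipschitz or DiPerna--Lions-type conditions on the conditional-expectation velocity field, and avoidance of the antipodal locus) that the paper subsumes under the bare assertion of uniqueness on a compact manifold, which is a reasonable sharpening rather than a different argument.
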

\begin{theorem}
\label{theo:cost}
(\textbf{Reducing transport costs}). The pair $(\bm{q}_0^{\prime}, \bm{q}_1^{\prime})$ yields lower or equal convex transport costs than the input $(\bm{q}_0, \bm{q}_1)$. For any convex $c$: $\mathbb{R}^3 \rightarrow \mathbb{R}$, define the cost as $C(\bm{q}_0, \bm{q}_1) = c\left( \log(\bm{q}_0^{-1} \otimes \bm{q}_1)\right)$.
Then, we have $\mathbb{E}[C(\bm{q}_0^{\prime}, \bm{q}_1^{\prime})] \leq \mathbb{E}[C(\bm{q}_0, \bm{q}_1)]$.
\end{theorem}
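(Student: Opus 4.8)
The plan is to reduce Theorem~\ref{theo:cost} to the corresponding statement in~\cite{liu2022rectified} by exploiting the exponential-format SLERP as a coordinate change that linearizes the quaternion flow. The key observation is that the curve $\bm{q}_t = \bm{q}_0 \otimes \exp(t\log(\bm{q}_0^{-1}\otimes\bm{q}_1))$ is, after the fixed left-translation by $\bm{q}_0$ and the logarithmic map, literally a straight line in $\mathbb{R}^3$: setting $\bm{\omega}(\bm{q}_0,\bm{q}_1) = 2\log(\bm{q}_0^{-1}\otimes\bm{q}_1)$, we have $2\log(\bm{q}_0^{-1}\otimes\bm{q}_t) = t\,\bm{\omega}(\bm{q}_0,\bm{q}_1)$, so the angular velocity along the SLERP trajectory is the constant $\bm{\omega}(\bm{q}_0,\bm{q}_1)$ (as already noted in Eq.~\ref{eq:slerp_e}). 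This means the rotational flow, viewed through the chart $\bm{q}\mapsto(\bm{q}_0,\ 2\log(\bm{q}_0^{-1}\otimes\bm{q}))$, is exactly a linear-interpolation (rectifiable) flow of the Euclidean type analyzed in~\cite{liu2022rectified}, with "position" variable $\bm{\omega}_t := t\,\bm{\omega}(\bm{q}_0,\bm{q}_1)$ and drift field $\bm{\omega}_{\theta,t}$ as in Eq.~\ref{eq:velocity}.

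Concretely, I would proceed in the following steps. First, I would state and invoke the $\mathbb{R}^3$ result of~\cite{liu2022rectified}: for a rectified flow built from linear interpolations $\bm{z}_t = (1-t)\bm{z}_0 + t\bm{z}_1$ with a learned velocity field that is the conditional expectation of $\bm{z}_1-\bm{z}_0$ given $\bm{z}_t$, the rectified coupling $(\bm{z}_0',\bm{z}_1')$ satisfies $\mathbb{E}[c(\bm{z}_1'-\bm{z}_0')]\le\mathbb{E}[c(\bm{z}_1-\bm{z}_0)]$ for every convex $c$. Second, I would identify the quaternion flow with such a Euclidean flow: the SLERP trajectory in exponential format corresponds, under the per-sample chart centered at $\bm{q}_0$, to the linear interpolation of $\bm{0}$ and $\bm{\omega}(\bm{q}_0,\bm{q}_1)$, and the angular velocity field $\bm{\omega}_{\theta,t}$ learned by QFlow matching is precisely the $\text{SO}(3)$ analogue of the conditional-expectation velocity (this is what the loss $\mathcal{L}_{\text{SO}(3)}$ in Eq.~\ref{eq:objs} regresses). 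Third, since $C(\bm{q}_0,\bm{q}_1) = c(\log(\bm{q}_0^{-1}\otimes\bm{q}_1)) = c\big(\tfrac12\bm{\omega}(\bm{q}_0,\bm{q}_1)\big)$ and $c(\tfrac12\cdot)$ is again convex, the Euclidean inequality transfers verbatim to give $\mathbb{E}[C(\bm{q}_0',\bm{q}_1')]\le\mathbb{E}[C(\bm{q}_0,\bm{q}_1)]$. Finally, I would note that Theorem~\ref{theo:marginal} (already established) guarantees the rectified process is a valid coupling of $\mathcal{Q}_0,\mathcal{Q}_1$, so the cost comparison is between two genuine couplings of the same marginals.

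The main obstacle is making the "identification with a Euclidean flow" step rigorous rather than merely heuristic, because the chart $\bm{q}\mapsto 2\log(\bm{q}_0^{-1}\otimes\bm{q})$ depends on the random initial point $\bm{q}_0$ and is only a local diffeomorphism on $\text{SO}(3)$ (the logarithm is multivalued at rotation angle $\pi$, i.e.\ on a measure-zero set). I would handle this by working with the lifted variable $\bm{\omega}_t$ directly: the rectification procedure only ever transports along the straight segments $t\mapsto t\,\bm{\omega}(\bm{q}_0,\bm{q}_1)$ in the tangent space, the velocity field $\bm{\omega}_{\theta,t}$ is defined on this lifted space, and the Euler updates in Eq.~\ref{eq:infer_slerp} are exactly Euler steps on $\bm{\omega}_t$ pushed forward by $\exp$. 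Thus the entire rectified-flow construction can be carried out in the linear space of angular-velocity vectors, where~\cite{liu2022rectified} applies without change, and the exponential map is applied only at the end to read off $\bm{q}_1'$. The mild assumption referenced in the theorem statement is presumably exactly that the SLERP geodesic between $\bm{q}_0$ and $\bm{q}_1$ is unique almost surely (equivalently, $\Pr[\phi=\pi]=0$ under $\mathcal{Q}_0\times\mathcal{Q}_1$ and the induced coupling), which holds since $\mathcal{Q}_0$ is the isotropic Gaussian on $\text{SO}(3)$; I would make this assumption explicit and note it ensures $\bm{\omega}(\bm{q}_0,\bm{q}_1)$ and the flow map are well-defined a.s.
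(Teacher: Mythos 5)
Your overall instinct---linearize the rotation flow through the SLERP log map and run the convex/Jensen argument of~\cite{liu2022rectified} there---is the same idea underlying the paper's proof, but your specific reduction has a genuine gap. The paper does not invoke the Euclidean theorem as a black box: it writes $\log(\bm{q}_0^{-1}\otimes\bm{q}_1)=\tfrac12\int_0^1\bm{\omega}_t\,dt$, observes that the rectified displacement $\log(\bm{q}_0^{\prime-1}\otimes\bm{q}_1^{\prime})=\tfrac12\int_0^1\tilde{\bm{\omega}}_t(\bm{q}_t^{\prime},t)\,dt$ is a conditional expectation of the original one, and then applies conditional Jensen plus the tower property. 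Your key step---``the entire rectified-flow construction can be carried out in the linear space of angular-velocity vectors, where~\cite{liu2022rectified} applies without change''---is where the argument breaks. The rectified drift is $\tilde{\bm{\omega}}_t(\bm{q},t)=\mathbb{E}[\bm{\omega}_t\mid\bm{q}_t=\bm{q}]$, a conditional expectation given the manifold-valued state $\bm{q}_t\in\mathbb{S}^3$; it is not a field on your per-sample lifted coordinate $t\,\bm{\omega}(\bm{q}_0,\bm{q}_1)$. Two trajectories that cross at the same point $\bm{q}$ at time $t$ generally have different $\bm{q}_0$'s and hence different lifted coordinates, so conditioning on $\bm{q}_t$ and conditioning on the lifted position generate different sigma-algebras. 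The Euclidean rectified flow of the lifted variable is therefore a different process producing a different coupling, and the cost inequality of~\cite{liu2022rectified} for that process does not transfer ``verbatim'' to the coupling $(\bm{q}_0^{\prime},\bm{q}_1^{\prime})$ that ReQFlow actually constructs.

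A second, related issue is the endpoint identity that both you and the paper need: expressing the rectified net rotation as the time integral of the rectified angular velocity. On $\text{SO}(3)$ this is false in general, because the rectified trajectory need not have a fixed axis and composing the exponential Euler increments does not equal the exponential of the integrated velocity (non-commutativity); this is precisely what the paper's proof flags with its remark that ``one must keep track of the axis direction \dots the geodesic assumption here handles that,'' and it is the real content of the ``mild assumptions,'' with Corollary~\ref{cor:nonconstant_speed} treating the constant-axis case explicitly. Your proposed assumption (no angle-$\pi$ pairs, so the logarithm is single-valued almost surely) only removes a measure-zero chart ambiguity and repairs neither of these two points; likewise, ``applying $\exp$ only at the end to read off $\bm{q}_1^{\prime}$'' would change the sampler being analyzed rather than prove the theorem for the one defined in the paper.
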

Theorem~\ref{theo:cost} shows that the coupling $(\bm{q}_0^{\prime}, \bm{q}_1^{\prime})$ either achieves a strictly lower or the same convex transport cost compared to the original one, highlighting the advantage of the quaternion flow rectification in reducing the overall rotation displacement cost without compromising the marginal distribution constraints (Theorem~\ref{theo:marginal}).
In addition, we have
\begin{corollary}
\label{cor:nonconstant_speed}
(\textbf{Cost Reduction with Nonconstant Speed}). Suppose the geodesic interpolation $\bm{q}_t$ between $\bm{q}_0$ and $\bm{q}_1$ has a constant axis $\bm{u}$, but its speed is nonconstant in time, i.e, $\bm{\omega}_t = a(t)\bm{u}$.
The quaternion flow rectification still reduces or preserves the transport cost.
\end{corollary}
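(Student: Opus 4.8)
The plan is to derive the corollary from Theorem~\ref{theo:cost} by viewing the nonconstant-speed geodesic as a deterministic time-reparametrization of the constant-angular-velocity SLERP flow, and then showing that flow rectification commutes with such a reparametrization. Fix a pair $(\bm{q}_0,\bm{q}_1)\sim\mathcal{Q}_0\times\mathcal{Q}_1$ and write $\phi\bm{u}=2\log(\bm{q}_0^{-1}\otimes\bm{q}_1)$ for its net rotation vector (the axis $\bm{u}$ is exactly the constant axis postulated in the statement). Set $\theta(t)=\int_0^t a(s)\,ds$; the hypothesis that $\bm{q}_t$ is a geodesic joining $\bm{q}_0$ to $\bm{q}_1$ with $\bm{\omega}_t=a(t)\bm{u}$ forces $\theta(1)=\phi$, so --- assuming $a>0$, i.e.\ the path does not backtrack --- the map $\tau(t):=\theta(t)/\phi$ is a strictly increasing bijection of $[0,1]$ with $\tau(0)=0$, $\tau(1)=1$, and $\bm{q}_t=\bm{q}_0\otimes\exp\!\bigl(\tfrac12\theta(t)\bm{u}\bigr)$. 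Hence $\tilde{\bm{q}}_s:=\bm{q}_{\tau^{-1}(s)}=\bm{q}_0\otimes\exp\!\bigl(s\log(\bm{q}_0^{-1}\otimes\bm{q}_1)\bigr)$ is precisely the constant-speed interpolation of Eq.~\ref{eq:slerp_e}. Since $\tau$ fixes both endpoints, the two interpolation families share the coupling $(\bm{q}_0,\bm{q}_1)$ and therefore the same input transport cost $\mathbb{E}[C(\bm{q}_0,\bm{q}_1)]$.

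The second step propagates the time change through the rectification map. The QFlow velocity field learned from the nonconstant-speed family is the conditional expectation of the instantaneous angular velocity, $\bm{\omega}_{\theta,t}^{\tau}(\bm{q})=\mathbb{E}[\bm{\omega}_t\mid\bm{q}_t=\bm{q}]$; because $\tau$ is deterministic and $\mathrm{Law}(\bm{q}_t)=\mathrm{Law}(\tilde{\bm{q}}_{\tau(t)})$, it factors out to give $\bm{\omega}_{\theta,t}^{\tau}(\bm{q})=\dot\tau(t)\,\bm{\omega}_{\theta,\tau(t)}^{\mathrm{const}}(\bm{q})$, where $\bm{\omega}_{\theta,\cdot}^{\mathrm{const}}$ is the field learned from the constant-speed family. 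Let $\tilde{\bm{Q}}_s$ solve the rectified ODE on $\mathbb{S}^3$ driven by $\bm{\omega}_{\theta,\cdot}^{\mathrm{const}}$ from the shared initial noise $\bm{q}_0^{\prime}$. Applying the chain rule on the sphere, the curve $t\mapsto\tilde{\bm{Q}}_{\tau(t)}$ has tangent $\dot\tau(t)$ times that of $\tilde{\bm{Q}}$ at time $\tau(t)$, hence it solves the rectified ODE driven by $\bm{\omega}_{\theta,\cdot}^{\tau}$; by uniqueness it is the rectified trajectory $\bm{q}_t^{\prime}$ of the nonconstant-speed flow. Because $\tau(0)=0$ and $\tau(1)=1$, the endpoints coincide: $(\bm{q}_0^{\prime},\bm{q}_1^{\prime})=(\tilde{\bm{Q}}_0,\tilde{\bm{Q}}_1)$.

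Finally, since $C$ depends only on its two endpoint arguments, $\mathbb{E}[C(\bm{q}_0^{\prime},\bm{q}_1^{\prime})]=\mathbb{E}[C(\tilde{\bm{Q}}_0,\tilde{\bm{Q}}_1)]\le\mathbb{E}[C(\tilde{\bm{q}}_0,\tilde{\bm{q}}_1)]=\mathbb{E}[C(\bm{q}_0,\bm{q}_1)]$, where the middle inequality is Theorem~\ref{theo:cost} applied to the constant-speed flow; the same identification together with Theorem~\ref{theo:marginal} and $\mathrm{Law}(\tilde{\bm{q}}_{\tau(t)})=\mathrm{Law}(\bm{q}_t)$ also transfers the marginal-preserving property. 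I expect the main difficulty to be bookkeeping rather than a new idea: one must check that the speed profile $a$ is common to all pairs (or pair-dependent only through an overall scale), since otherwise $\dot\tau(t)$ cannot be pulled out of $\mathbb{E}[\cdot\mid\bm{q}_t]$ and the reduction fails; that $a>0$ with $\int_0^1 a=\phi$, so that $\tau$ is an admissible endpoint-preserving reparametrization; and that the rectified ODE on $\mathbb{S}^3$ is well posed enough to justify the uniqueness/chain-rule substitution --- the same regularity caveat already implicit in Theorems~\ref{theo:marginal} and~\ref{theo:cost}. A direct route via Jensen's inequality is also possible but would have to confront the non-commutativity of $\mathrm{SO}(3)$ head-on, which is precisely what the constant-axis hypothesis lets us sidestep.
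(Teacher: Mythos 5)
Your route is genuinely different from the paper's. The paper does not reduce to Theorem~\ref{theo:cost} by a time change; it simply re-runs the same conditional-Jensen argument directly on the nonconstant-speed process: since $\bm{\omega}_t=a(t)\bm{u}$ with the axis pulled out, the net rotations are $\tau\bm{u}$ and $\tau'\bm{u}$ with $\tau'=\tfrac12\int_0^1\mathbb{E}[a(t)\mid\bm{q}_t^{\prime}]\,dt$, so $\tau'\bm{u}=\mathbb{E}[\tau\bm{u}\mid\{\bm{q}_t^{\prime}\}]$, and convexity of $c$ plus the tower property give $\mathbb{E}[c(\tau'\bm{u})]\le\mathbb{E}[c(\tau\bm{u})]$. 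Your reparametrization argument, by contrast, treats Theorem~\ref{theo:cost} as a black box and shows that rectification commutes with a deterministic time change, so the rectified couplings of the two flows coincide. When it applies, your argument is arguably cleaner (it needs the axis to be constant only along each path, not pulled out of a conditional expectation, whereas the paper's step $\tilde{\bm{\omega}}_t=\mathbb{E}[a(t)\mid\bm{q}_t]\bm{u}$ implicitly treats $\bm{u}$ as nonrandom), and it covers the motivating exponential-scheduler case, where the normalized profile $\kappa(t)=1-e^{-\gamma t}$ is the same for every pair.

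However, as a proof of the corollary as stated there is a real gap, and it is exactly the one you flag but do not close: the speed $a(t)$ is in general pair-dependent (it must integrate to each pair's net angle), and your construction needs $a(t)=\phi\, b(t)$ with a \emph{common deterministic} profile $b$ so that $\tau(t)=\int_0^t b$ is nonrandom. Otherwise $\tau$ is a random time change, the identity $\bm{\omega}^{\tau}_{\theta,t}(\bm{q})=\dot\tau(t)\,\bm{\omega}^{\mathrm{const}}_{\theta,\tau(t)}(\bm{q})$ fails (you cannot pull $\dot\tau$ out of $\mathbb{E}[\cdot\mid\bm{q}_t=\bm{q}]$), and the endpoint identification $(\bm{q}_0^{\prime},\bm{q}_1^{\prime})=(\tilde{\bm{Q}}_0,\tilde{\bm{Q}}_1)$ no longer follows; the paper's direct Jensen argument needs no such assumption because it never linearizes the time dependence, only the scalar speed inside a conditional expectation. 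Your additional requirements $a>0$ and $\tau(1)=1$ are comparatively minor (monotonicity is not really needed for the chain-rule/endpoint step, and the scheduler only satisfies $\kappa(1)=1-e^{-\gamma}\approx1$), but to turn your proposal into a complete proof of the corollary in its stated generality you would either have to add the common-profile hypothesis explicitly or fall back on the paper's direct conditional-expectation computation.
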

This corollary means that when applying the exponential step size scheduler (i.e., Eq.~\eqref{eq:infer_slerp}), the rectification still reduces or preserves the transport cost.

\subsection{Rationality Analysis}\label{sec:rational}
Most existing methods, like FrameFlow~\cite{yim2023fast} and FoldFlow~\cite{bose2023se}, represent rotations as $3\times3$ matrices. 
Given two rotation matrices $\bm{R}_0$ and $\bm{R}_1$, they construct a flow in $\text{SO}(3)$ with matrix geodesic interpolation:
\begin{equation}\label{eq:matrix_geo}
    \bm{R}_t = \bm{R}_0 \exp_M\left(t \log_M(\bm{R}_0^\top \bm{R}_1)\right),
\end{equation}
where $\exp_M(\cdot)$ and $\log_M(\cdot)$ denote the matrix exponential and logarithmic maps, respectively. 
The corresponding angular velocity $\bm{\Omega}=\log_M(\bm{R}_0^\top \bm{R}_1)$. 
Different from existing methods~\cite{yim2023fast,yim2023se,bose2023se}, our method applies quaternion-based rotation representation and achieves rotation interpolation by SLERP in an exponential format, which achieves superior numerical stability and thus benefits protein backbone generation.

To verify this claim, we conduct a round-trip error experiment: given an rotation $\bm{\omega}$ in the axis-angle format, we convert it to a rotation matrix $\bm{R}$ and a quaternion $\bm{q}$, respectively, and convert it back to the axis-angle format, denoted as $\hat{\bm{\omega}}_R$ and $\hat{\bm{\omega}}_q$, respectively.
Figure~\ref{fig:rotation_error} shows the round-trip errors in $L_2$ norm for large rotation angles (e.g., $\phi\in[\pi-10^{-2}, \pi)$). 
Our quaternion-based method is numerically stable while the matrix-based representation suffers severe numerical errors.
When training a protein backbone generation model, the numerical stability for large rotation angles is important.
Given the frames in the Protein Data Bank (PDB)~\cite{burley2023rcsb} dataset and the SCOPe~\cite{chandonia2022scope} dataset, we sample a random noise for each frame and calculate the rotation angle between them. 
The histogram in Figure~\ref{fig:dataset_large_angles} shows that when training an arbitrary flow-based model, the probability of suffering at least one large angle per protein is 0.59 for PDB and 0.34 for SCOPe, respectively.
It means that the matrix-based representation may introduce undesired numerical errors that aggregate and propagate during training.

In addition, a very recent work, AssembleFlow~\cite{guo2025assembleflow}, also applies quaternion-based rotation representation and SLERP when modeling 3D molecules. 
In particular, it applies SLERP in an additive format:
\begin{equation}\label{eq:slerp_a}
    \bm{q}_t = \frac{\sin((1 - t) \frac{\phi}{2})}{\sin (\frac{\phi}{2})} \bm{q}_0 + \frac{\sin(t \frac{\phi}{2})}{\sin(\frac{\phi}{2})} \bm{q}_1,
\end{equation}
and updates rotations linearly by the following Euler solver:
\begin{equation}\label{eq:euler_add}
    \bm{q}_{t + \Delta t} = \bm{q}_t + \Delta t \cdot \bm{\eta}_t.
\end{equation}
Here, $\bm{\eta}_t$ is the instantaneous velocity in the tangent space of $\bm{q}_t$, which is derived by the first-order derivative of Eq.~\eqref{eq:slerp_a}.
However, this modeling strategy also suffers numerical issues.
Firstly, although the additive format SLERP can generate the same interpolation path as ours in theory, when rotation angle $\phi$ is small (e.g., $\phi\in [0, 10^{-6})$), Eq.~\eqref{eq:slerp_a} often outputs ``NaN'' because the denominator $\sin(\frac{\phi}{2})$ tends to zero.
The exponential step size scheduler leads to rapid convergence when generating protein backbones, which frequently generates rotation angles below the threshold $10^{-6}$ (as shown in Figure~\ref{fig:small_angles}) and thus makes the additive format SLERP questionable in our task.
Secondly, the Euler step in Eq.~\eqref{eq:euler_add} makes $\|\bm{q}_{t + \Delta t}\|_2\neq 1$, so that renormalization is required after each update.
Table~\ref{tb:rot_comparison} provides a comprehensive comparison for the three rotation interpolation methods, highlighting the advantages of our method.

\section{Experiment}
To demonstrate the effectiveness and efficiency of our methods (QFlow and ReQFlow), we conduct comprehensive experiments to compare them with state-of-the-art protein backbone generation methods.
In addition, we conduct ablation studies to verify the usefulness of the flow rectification strategy and the impact of sampling steps on model performance.
All the experiments are implemented on four NVIDIA A100 80G GPUs. 
Implementation details and experimental results are shown in this section and Appendix~\ref{app:exp}.

\subsection{Experimental Setup}
\textbf{Datasets.}
We apply two commonly used datasets in our experiments. 
The first is the 23,366 protein backbones collected from Protein Data Bank (PDB)~\cite{burley2023rcsb}, whose lengths range from 60 to 512.
The second is the SCOPe dataset~\cite{chandonia2022scope} pre-processed by FrameFlow~\cite{yim2023fast}, which contains 3,673 protein backbones with lengths ranging from 60 to 128. 

\textbf{Baselines.} The baselines of our methods include diffusion-based methods (FrameDiff~\cite{yim2023se}, RFDiffusion~\cite{watson2023novo}, and Genie2~\cite{lin2024out}) and flow-based methods (FrameFlow~\cite{yim2023fast}, FoldFlow~\cite{bose2023se}, and FoldFlow2~\cite{huguet2024sequence}).
In addition, we rectify FrameFlow by our method (i.e., re-training FrameFlow based on the paired data generated by itself) and consider the rectified FrameFlow (ReFrameFlow) as a baseline as well.

\begin{table}[t]
    \centering
    \vspace{-5pt}
    \caption{Comparisons for various models on PDB.
    For each designability metric, we bold the best result and show the top-3 results with a blue background.
    In the same way, we indicate the best and top-3 diversity and novelty results among the rows with Fraction $> 0.8$.
    The inference time corresponds to generating a protein backbone with length $N=300$.}
    \resizebox{\linewidth}{!}{
    \tabcolsep=1pt
    \begin{tabular}{lccccccc}
             \toprule
            \multirow{2}{*}{Method} & \multicolumn{2}{c}{\textbf{Efficiency}} & \multicolumn{2}{c}{\textbf{Designability}} & \multicolumn{1}{c}{\textbf{Diversity}} & \multicolumn{1}{c}{\textbf{Novelty}}\\
            \cmidrule(lr){2-3} \cmidrule(lr){4-5} \cmidrule(lr){6-6} \cmidrule(lr){7-7}
            & Step
            & Time(s)
            & Fraction$\uparrow$
            & scRMSD$\downarrow$
            & TM$\downarrow$
            & TM$\downarrow$\\
            \midrule
            RFDiffusion & 50 &  66.23 & 0.904 &\cellcolor{top3}1.102$_{\pm\text{1.617}}$ & 0.382 &  \cellcolor{top2}0.527\\
            \midrule
            Genie2 & 1000 & 112.93& 0.908 & 1.132$_{\pm\text{1.389}}$ & 0.370 & \cellcolor{top1}\textbf{0.475} \\
                   & 500 & 55.86 & 0.000 & 18.169$_{\pm\text{5.963}}$ & - & - \\
            \midrule
            FrameDiff & 500 & 48.12 & 0.564 & 2.936$_{\pm\text{3.093}}$ & 0.441 & 0.591 \\
            \midrule
            FoldFlow$_{\text{Base}}$ & 500 & 43.52 & 0.624 & 3.080$_{\pm\text{3.449}}$ & 0.469 & 0.645 \\
            FoldFlow$_{\text{SFM}}$ & 500 & 43.63 & 0.636 & 3.031$_{\pm\text{3.589}}$ & 0.411 & 0.604\\
            FoldFlow$_{\text{OT}}$ & 500 & 43.35 & 0.852 & 1.760$_{\pm\text{2.593}}$ & 0.434 & 0.617 \\
            \midrule
            FoldFlow2 & 50 & 6.35 & \cellcolor{top2}{0.952} & \cellcolor{top2}{1.083$_{\pm\text{1.308}}$} & 0.373  &  \cellcolor{top2}0.527 \\
                      & 20 & 2.63& 0.644& 3.060$_{\pm\text{3.210}}$& 0.339 &0.492 \\
            \midrule
            FrameFlow & 500 & 17.05 & 0.872 & 1.380$_{\pm\text{1.392}}$ &\cellcolor{top3} 0.346 & 0.562\\
             & 200 & 6.77 & 0.864 & 1.542$_{\pm\text{1.889}}$ & 0.348 & 0.564 \\
             & 100 & 3.46 & 0.708 & 2.167$_{\pm\text{2.373}}$ & 0.332 & 0.560 \\
             & 50 & 1.73 & 0.704 & 2.639$_{\pm\text{3.079}}$ & 0.334 & 0.536 \\
             & 20 & 0.71 & 0.436 & 4.652$_{\pm\text{4.390}}$ & 0.319 & 0.501 \\
             & 10 & 0.37 & 0.180 & 7.343$_{\pm\text{5.125}}$ & 0.317 & 0.482 \\
            \midrule
      
            QFlow & 500 & 17.37 & \cellcolor{top3}0.936 & 1.163$_{\pm\text{0.938}}$ & 0.356 & 0.635 \\
             & 200 & 7.10 & 0.864 & 1.400$_{\pm\text{1.259}}$ &\cellcolor{top1} \textbf{0.344} & 0.620 \\
             & 100 & 3.48 & 0.916 & 1.342$_{\pm\text{1.364}}$ & 0.348 & 0.614 \\
             & 50 & 1.77 & 0.812 & 1.785$_{\pm\text{2.151}}$ &\cellcolor{top1} \textbf{0.344} & 0.571 \\
             & 20 & 0.73 & 0.604 & 3.090$_{\pm\text{3.374}}$ & 0.325 & 0.537 \\
             & 10 &  0.38 & 0.332 & 5.032$_{\pm\text{4.303}}$ & 0.313 & 0.528 \\
            \midrule
            ReQFlow & 500 & 17.42 & \cellcolor{top1}\textbf{0.972} & \cellcolor{top1} \textbf{1.071$_{\pm\text{0.482}}$} & 0.377 & 0.645 \\
             & 200 & 6.94 & 0.932 & 1.160$_{\pm\text{0.782}}$ & 0.384 & 0.648 \\
             & 100 & 3.58 & 0.928 & 1.245$_{\pm\text{1.059}}$ & 0.369 & 0.629 \\
             & 50 & 1.78 & 0.912 & 1.254$_{\pm\text{0.915}}$ & 0.369 & 0.608 \\
             & 20 & 0.72 & 0.872 & 1.418$_{\pm\text{0.998}}$ & 0.355 & 0.581 \\
             & 10 & 0.38 & 0.676 & 2.443$_{\pm\text{2.382}}$ & 0.337 & 0.540 \\
            \bottomrule
    \end{tabular}
    }
    \label{tab:PDB main results}
\end{table}

\begin{figure*}[t]
    \centering
    \includegraphics[width=\linewidth]{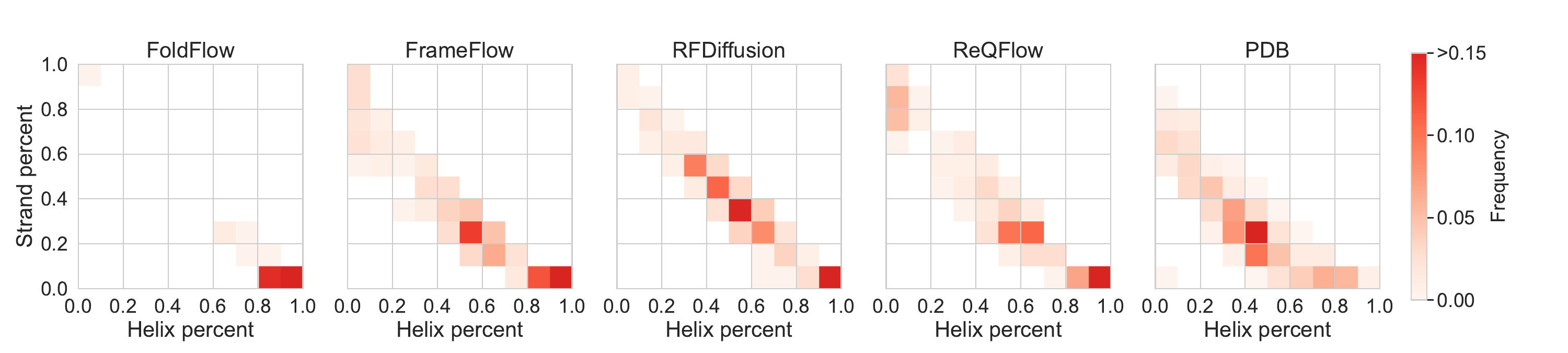}
    \caption{The distribution of protein backbones with respect to the percentages of their secondary structure.}
    \label{fig:distribution}
\end{figure*}

\begin{figure*}[t]
    \centering
    \begin{subfigure}[b]{0.32\textwidth}
    \centering
    \includegraphics[height=4cm]{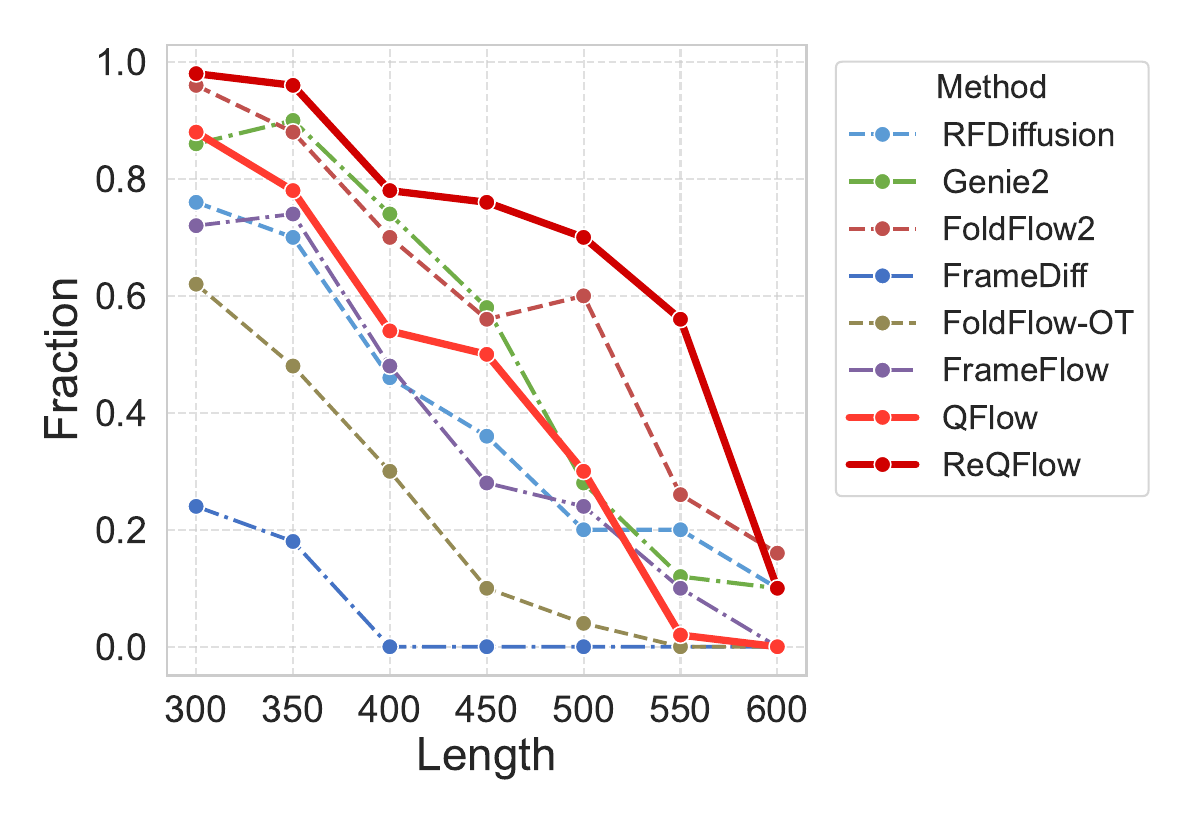}
    \subcaption{Fraction}
    \label{fig:long_chain_designability}
    \end{subfigure}
    \begin{subfigure}[b]{0.32\textwidth}
    \centering
    \includegraphics[height=4cm]{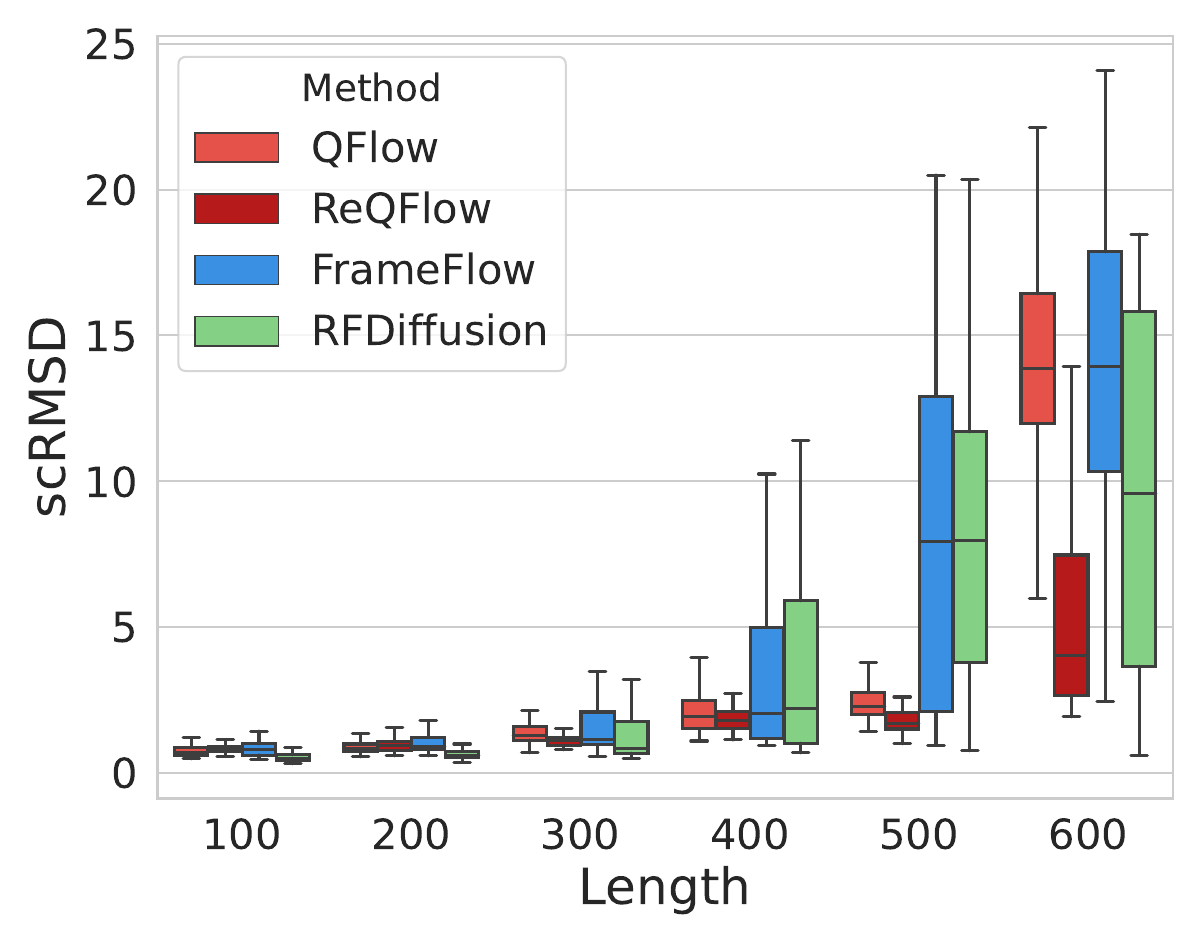}
    \subcaption{scRMSD}
    \label{fig:long_chain_rmsd}
    \end{subfigure}
    \begin{subfigure}[b]{0.32\textwidth}
    \centering
    \includegraphics[height=4.2cm,trim=25 60 0 120, clip]{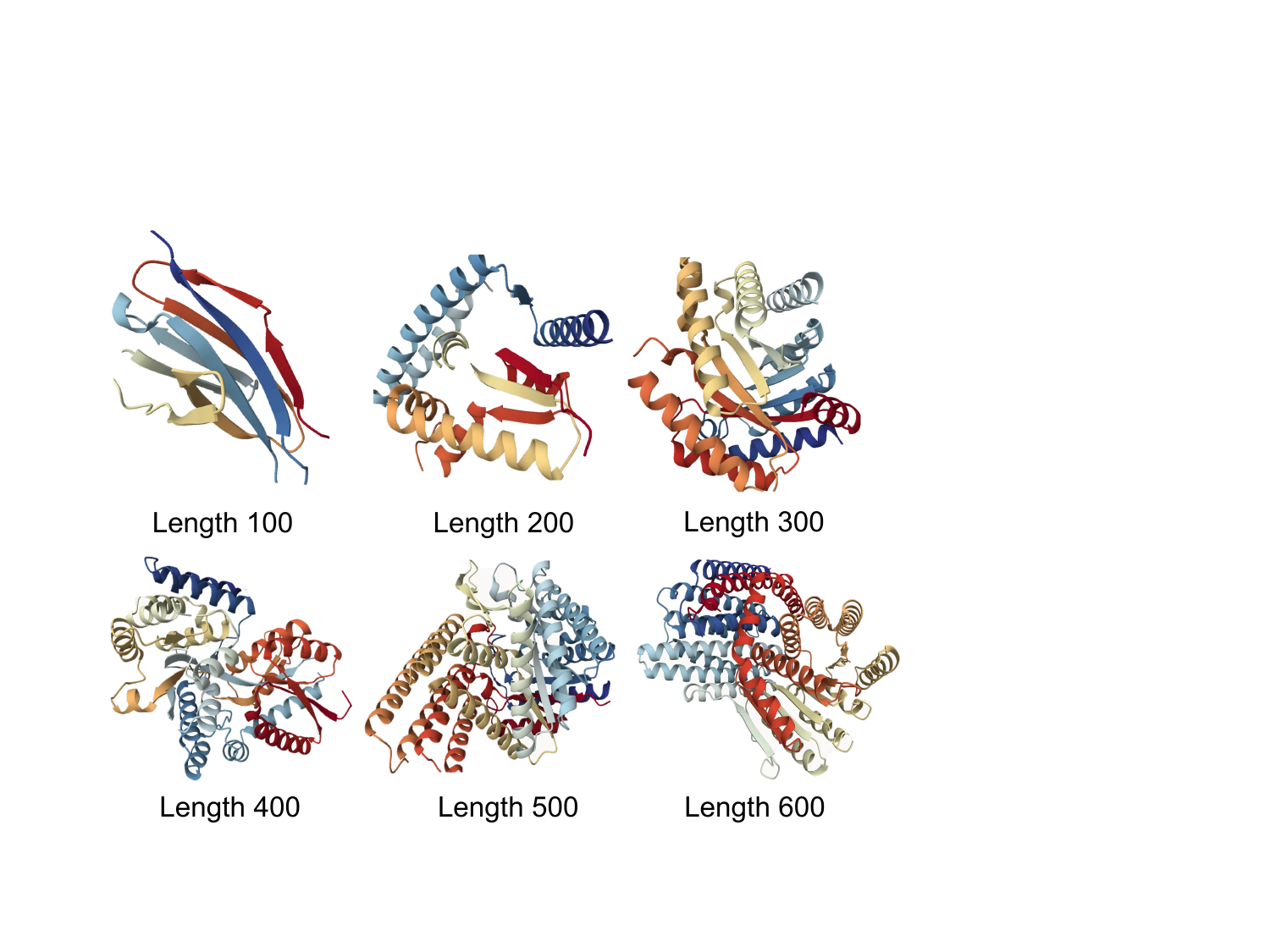}
    \subcaption{Samples generated by ReQFlow}
    \label{fig:example}
    \end{subfigure}
    \caption{The comparison for various methods on the designability of generated long-chain protein backbones.}
    \label{fig:long_chain}
\end{figure*}

\textbf{Implementation Details.}
For the PDB dataset, we utilize the checkpoints of baselines and reproduce the results shown in their papers.
Given the QFlow trained on PDB, we generate 7,653 protein backbones with lengths in $[60, 512]$ from noise and then train ReQFlow based on these noise-backbone pairs.
For the SCOPe dataset, we train all the models from scratch.
Given the QFlow trained on SCOPe, we generate 3,167 protein backbones with lengths in $[60, 128]$ from noise and then train ReQFlow based on these noise-backbone pairs.
When training ReQFlow, we apply structural data filtering, selecting training samples based on scRMSD ($\leq$2\AA) and TM-score ($\geq$0.9 for long-chain proteins) and removing proteins with excessive loops ($>$50\%) or abnormally large radius of gyration (top 4\%).
ReFrameFlow is trained in the same way.

\textbf{Evaluation Metrics.}
\label{sec:metrics} 
Following previous works~\cite{yim2023fast,bose2023se,geffner2025proteina}, we evaluate each method in the following four aspects: 

\textbf{1) Designability:} As the most critical metric, designability reflects the possibility that a generated protein backbone can be realized by folding the corresponding amino acid sequence. 
It is assessed by the scRMSD between the generated protein backbone and the backbone predicted by ESMFold~\cite{lin2023evolutionary}. 
Given a set of generated backbones, we calculate the proportion of the backbones whose $\text{scRMSD}\leq\text{2\AA}$ (denoted as Fraction).  

\textbf{2) Diversity:}  
Given designable protein backbones, whose $\text{scRMSD}\leq\text{2\AA}$, we quantify structural diversity by averaging the mean pairwise TM-scores computed for each backbone length.

\textbf{3) Novelty:} 
For each designable protein backbone, we compute its maximum TM-score to the data in PDB using Foldseek~\cite{van2022foldseek}. 
The average of the scores reflect the novelty of the generated protein backbones.

\textbf{4) Efficiency:} We assess the computational efficiency of each method by the number of sampling steps and the inference time for generating 50 proteins at two lengths: 300 residues for PDB and 128 residues for SCOPe.

\subsection{Comparison Experiments on PDB}
\textbf{Generation Quality.}
Given the models trained on PDB, we set the length of backbone $N\in\{100, 150, 200, 250, 300\}$, and generate 50 protein backbones for each length.
Table~\ref{tab:PDB main results} shows that ReQFlow achieves state-of-the-art performance in designability, achieving the highest Fraction (0.972) among all models, significantly outperforming strong competitors such as Genie2 (0.908) and RFDiffusion (0.904). 
Additionally, it achieves the lowest scRMSD (1.071$\pm$0.482), with a notably smaller variance compared to the other methods, highlighting the model's consistency and reliability in generating high-quality protein backbones. 
Meanwhile, ReQFlow maintains competitive performance in diversity and novelty.

\textbf{Computational Efficiency.} 
Moreover, ReQFlow achieves ultra-fast protein backbone generation.
Typically, ReQFlow achieves a high Fraction score (0.912) with merely 50 steps and 1.78s, outperforming RFDiffusion and Genie2 with 37$\times$ and 63$\times$ acceleration, respectively.
The state-of-the-art methods like Genie2 and FoldFlow2 suffer severe performance degradation in designability when the number of steps is halved, while ReQFlow performs stably even reducing the number of steps from 500 to 20. 
In addition, although the main speed bottleneck is model prediction, ReQFlow's rotation update can be $\sim$20\% faster than FrameFlow's update because of utilizing the quaternion-based computation (Appendix~\ref{app:speed}).

\textbf{Fitness of Data Distribution.} 
Given generated protein backbones, we record the percentages of helix and strand, respectively, for each backbone, and visualize the distribution of the backbones with respect to the percentages in Figure~\ref{fig:distribution}.
The protein backbones generated by ReQFlow have a reasonable distribution, which is similar to those of RFDiffusion and FrameFlow and comparable to that of the PDB dataset.
However, the distribution of FoldFlow is significantly different from the data distribution and indicates a mode collapse risk --- the protein backbones generated by FoldFlow are always dominated by helix structures.

\textbf{Effectiveness on Long Chain Generation.}
Notably, ReQFlow demonstrates exceptional performance in generating long-chain protein backbones (e.g., $N>300$). 
As shown in Figures~\ref{fig:long_chain_designability} and~\ref{fig:long_chain_rmsd}, ReQFlow outperforms all baselines on generating long protein backbones and shows remarkable robustness.
Especially, when the length $N>500$, which is out of the length range of PDB data, all the baselines fail to maintain high designability while ReQFlow still achieves promising performance in Fraction score and scRMSD and generates reasonable protein backbones.
This generalization ability beyond the training data distribution underscores ReQFlow’s potential for real-world applications requiring robust long-chain protein design.

\textbf{Ablation Study.}
We conduct an ablation study to evaluate the impact of different components in the ReQFlow model. 
The results in Table~\ref{tab:ablation} reveal that similar to existing methods~\cite{yim2023fast,bose2023se,huguet2024sequence}, the exponential step size scheduler is important for ReQFlow, helping generate designable protein backbones with relatively few steps. 
Additionally, the data filter is necessary for making flow rectification work.
In particular, rectifying QFlow based on low-quality data leads to a substantial degradation in model performance.
In contrast, after filtering out noisy and irrelevant data, rectifying QFlow based on the high-quality data boosts the model performance significantly.

\subsection{Analytic Experiments on SCOPe}
\textbf{Universality of Flow Rectification.}
Note that, the flow rectification method used in our work is universal for various models.
As shown in Table~\ref{tab:scope_statistics} and Figure~\ref{fig:SCOPe fig}, applying flow rectification, we can improve the efficiency and effectiveness of FrameFlow as well.
This result highlights the broad utility of flow rectification as an operation that can enhance the performance of flow models on SO(3) spaces.

\begin{table}[t]
\centering
\vspace{-5pt}
\caption{The Fraction scores of ReQFlow under different settings when generating backbones with 300 residues by 500 and 50 steps. For each method, we evaluate the results on five checkpoints.}
 \resizebox{\linewidth}{!}{%
\tabcolsep=4pt
\begin{tabular}{ccc|cc}
\toprule
Exponential& Flow& Data&\multicolumn{2}{c}{Sampling Steps}\\
Scheduler& Rectification & Filtering & 500 & 50 \\
\midrule
\ding{55} & \ding{55} & \ding{55} & $0.143_{\pm\text{0.079}}$& $0.047_{\pm\text{0.030}}$\\
\checkmark & \ding{55} & \ding{55} & $0.910_{\pm\text{0.029}}$ &$0.795_{\pm\text{0.051}}$ \\
\checkmark & \checkmark &\ding{55} &$0.612_{\pm\text{0.084}}$ & $0.519_{\pm\text{0.154}}$ \\
\checkmark &\checkmark &\checkmark & $0.969_{\pm\text{0.027}}$ &$0.932_{\pm\text{0.022}}$  \\
\bottomrule
\label{tab:ablation}
\end{tabular}
}
\end{table}

\begin{table*}[t]
    \small 
    \centering
    \caption{Comparisons for various models on SCOPe. Five checkpoints per model are evaluated to show statistical significance. We indicate the best and top-3 results in the same way as Table~\ref{tab:PDB main results} does.}
    \tabcolsep=1pt 
    \begin{minipage}[t]{0.48\textwidth} 
        \centering
        \begin{tabular}{@{}ccccccc@{}} 
            \toprule
            \multirow{2}{*}{} & \multirow{2}{*}{\textbf{Step}} & \multirow{2}{*}{\textbf{Epoch}}& \multicolumn{2}{c}{\textbf{Designability}} & \multicolumn{1}{c}{\textbf{Diversity}} & \multicolumn{1}{c}{\textbf{Novelty}}\\
            \cmidrule(lr){4-5} \cmidrule(lr){6-6} \cmidrule(lr){7-7}
            & & & Fraction$\uparrow$ & scRMSD$\downarrow$ & TM$\downarrow$ & TM$\downarrow$\\
            \midrule
            \textbf{FrameFlow}
             & 500 & 187  & 0.880 & 1.418$_{\pm\text{1.155}}$ & 0.399 & 0.711 \\
             &  & 189 & 0.849 & 1.448$_{\pm\text{1.114}}$ & 0.397 & 0.713  \\
             &  & 193 & 0.851 & 1.396$_{\pm\text{1.167}}$ & 0.390 & 0.720 \\
             &  & 195 & 0.845 & 1.427$_{\pm\text{1.011}}$ & 0.397 & 0.713 \\
             &  & 199 & 0.830 & 1.498$_{\pm\text{1.075}}$ & 0.379 & 0.712 \\
             \midrule
             & 50 & 187 & 0.797 & 1.603$_{\pm\text{1.138}}$ & 0.382 & 0.686 \\
             &    & 189 & 0.820 & 1.546$_{\pm\text{1.316}}$ & 0.379 & 0.685 \\
             &    & 193 & 0.836 & 1.504$_{\pm\text{1.158}}$ & \cellcolor{top2}0.375 & 0.680 \\
             &    & 195 & 0.817 & 1.528$_{\pm\text{1.019}}$ & 0.390 & 0.682 \\
             &    & 199 & 0.787 & 1.650$_{\pm\text{1.295}}$ & 0.366 & 0.676 \\
             \midrule
             & 20 & 187 & 0.739 & 1.888$_{\pm\text{1.452}}$ & 0.373 & 0.656 \\
             &    & 189 & 0.713 & 1.918$_{\pm\text{1.495}}$ & 0.362 & 0.656 \\
             &    & 193 & 0.723 & 1.882$_{\pm\text{1.488}}$ & 0.371 & 0.645 \\
             &    & 195 & 0.687 & 2.035$_{\pm\text{1.574}}$ & 0.377 & 0.650 \\
             &    & 199 & 0.677 & 2.106$_{\pm\text{1.770}}$ & 0.369 & 0.636 \\
             \midrule
            \textbf{ReFrameFlow}
            & 500 & 2 & 0.912 & 1.205$_{\pm\text{0.675}}$ & 0.406 & 0.713 \\
            &     & 3 & 0.933 & 1.211$_{\pm\text{0.631}}$ & 0.410 & 0.713 \\
            &     & 4 & 0.923 & 1.201$_{\pm\text{0.600}}$ & 0.411 & 0.709 \\
            &     & 5 & 0.923 & 1.270$_{\pm\text{0.721}}$ & 0.399 & 0.701 \\
            &     & 6 & 0.930 & 1.178$_{\pm\text{0.628}}$ & 0.407 & 0.709 \\
            \midrule
            & 50  & 2 & 0.903 & 1.267$_{\pm\text{0.704}}$ & 0.407 & 0.691 \\
            &     & 3 & 0.897 & 1.262$_{\pm\text{0.739}}$ & 0.408 & 0.692 \\
            &     & 4 & 0.909 & 1.271$_{\pm\text{0.674}}$ & 0.408 & 0.690 \\
            &     & 5 & 0.906 & 1.270$_{\pm\text{0.717}}$ & 0.406 & 0.690 \\
            &     & 6 & 0.914 & 1.272$_{\pm\text{0.868}}$ & 0.406 & 0.684 \\
            \midrule
            & 20  & 2 & 0.877 & 1.432$_{\pm\text{1.043}}$ & 0.400 & \cellcolor{top3}0.669 \\
            &     & 3 & 0.878 & 1.378$_{\pm\text{0.794}}$ & 0.410 & 0.673 \\
            &     & 4 & 0.899 & 1.354$_{\pm\text{0.810}}$ & 0.405 & 0.679 \\
            &     & 5 & 0.877 & 1.440$_{\pm\text{0.936}}$ & 0.400 & 0.672 \\
            &     & 6 & 0.888 & 1.393$_{\pm\text{1.023}}$ & 0.409 & 0.675 \\
            \bottomrule
        \end{tabular}
    \end{minipage}%
    \hfill 
    \begin{minipage}[t]{0.48\textwidth} 
        \centering
        \begin{tabular}{@{}ccccccc@{}} 
            \toprule
            \multirow{2}{*}{} & \multirow{2}{*}{\textbf{Step}} & \multirow{2}{*}{\textbf{Epoch}}& \multicolumn{2}{c}{\textbf{Designability}} & \multicolumn{1}{c}{\textbf{Diversity}} & \multicolumn{1}{c}{\textbf{Novelty}}\\
            \cmidrule(lr){4-5} \cmidrule(lr){6-6} \cmidrule(lr){7-7}
            & & & Fraction$\uparrow$ & scRMSD$\downarrow$ & TM$\downarrow$ & TM$\downarrow$\\
            \midrule
            \textbf{QFlow}
            & 500 & 181  & 0.899 & 1.240$_{\pm\text{0.781}}$ & 0.392 & 0.695 \\
            &     & 185  & 0.913 & 1.210$_{\pm\text{0.792}}$ & 0.393 & 0.719 \\
            &     & 195  & 0.907 & 1.263$_{\pm\text{1.334}}$ & 0.389 & 0.712 \\
            &     & 197  & 0.893 & 1.285$_{\pm\text{0.929}}$ & 0.401 & 0.716 \\
            &     & 199  & 0.852 & 1.444$_{\pm\text{0.991}}$ & 0.385 & 0.693 \\
            \midrule
            & 50  & 181  & 0.849 & 1.447$_{\pm\text{1.148}}$ & \cellcolor{top3}0.379 & \cellcolor{top1}\textbf{0.648} \\
            &     & 185  & 0.875 & 1.389$_{\pm\text{1.169}}$ & 0.386 & 0.687 \\
            &     & 195  & 0.872 & 1.389$_{\pm\text{1.314}}$ & \cellcolor{top1}\textbf{0.371} & 0.674 \\
            &     & 197  & 0.823 & 1.517$_{\pm\text{1.196}}$ & 0.384 & 0.682 \\
            &     & 199  & 0.849 & 1.461$_{\pm\text{0.970}}$ & 0.378 & 0.682 \\
            \midrule
            & 20  & 181  & 0.778 & 1.746$_{\pm\text{1.462}}$ & 0.369 & 0.620 \\
            &     & 185  & 0.778 & 1.683$_{\pm\text{1.359}}$ & 0.377 & 0.655 \\
            &     & 195  & 0.764 & 1.764$_{\pm\text{1.529}}$ & 0.367 & 0.646 \\
            &     & 197  & 0.746 & 1.834$_{\pm\text{1.511}}$ & 0.374 & 0.652 \\
            &     & 199  & 0.742 & 1.802$_{\pm\text{1.256}}$ & 0.373 & 0.648 \\
            \midrule
            \textbf{ReQFlow}
            & 500 & 2  & 0.939 & \cellcolor{top2}1.120$_{\pm\text{0.647}}$ & 0.411 & 0.704 \\
            &     & 3  & \cellcolor{top2}0.952 & \cellcolor{top1}\textbf{1.088$_{\pm\text{0.523}}$} & 0.404 & 0.703 \\
            &     & 4  & 0.939 & 1.149$_{\pm\text{0.595}}$ & 0.405 & 0.693 \\
            &     & 5  & \cellcolor{top3}0.949 & 1.155$_{\pm\text{0.565}}$ & 0.402 & 0.690 \\
            &     & 6  & \cellcolor{top1}\textbf{0.955} & \cellcolor{top3}1.143$_{\pm\text{0.819}}$ & 0.407 & 0.688 \\
            \midrule
            & 50 & 2  & 0.928 & 1.157$_{\pm\text{0.747}}$ & 0.411 & 0.687 \\
            &    & 3  & 0.919 & 1.179$_{\pm\text{0.762}}$ & 0.410 & 0.688 \\
            &    & 4  & 0.916 & 1.194$_{\pm\text{0.850}}$ & 0.413 & 0.676 \\
            &    & 5  & 0.933 & 1.184$_{\pm\text{0.691}}$ & 0.407 & 0.670 \\
            &    & 6  & 0.914 & 1.229$_{\pm\text{0.768}}$ & 0.414 & 0.681 \\
            \midrule
            & 20 & 2  & 0.929 & 1.267$_{\pm\text{0.844}}$ & 0.407 & 0.678 \\
            &    & 3  & 0.913 & 1.256$_{\pm\text{0.760}}$ & 0.404 & 0.674 \\
            &    & 4  & 0.913 & 1.232$_{\pm\text{0.725}}$ & 0.406 & 0.671 \\
            &    & 5  & 0.893 & 1.322$_{\pm\text{0.719}}$ & 0.404 & 0.670 \\
            &    & 6  & 0.900 & 1.331$_{\pm\text{0.885}}$ & 0.405 & \cellcolor{top2}0.663 \\
            \bottomrule
        \end{tabular}
    \end{minipage}
    \label{tab:scope_statistics}
\end{table*}

\begin{figure}[t]
    \centering
    \includegraphics[width=0.9\linewidth]{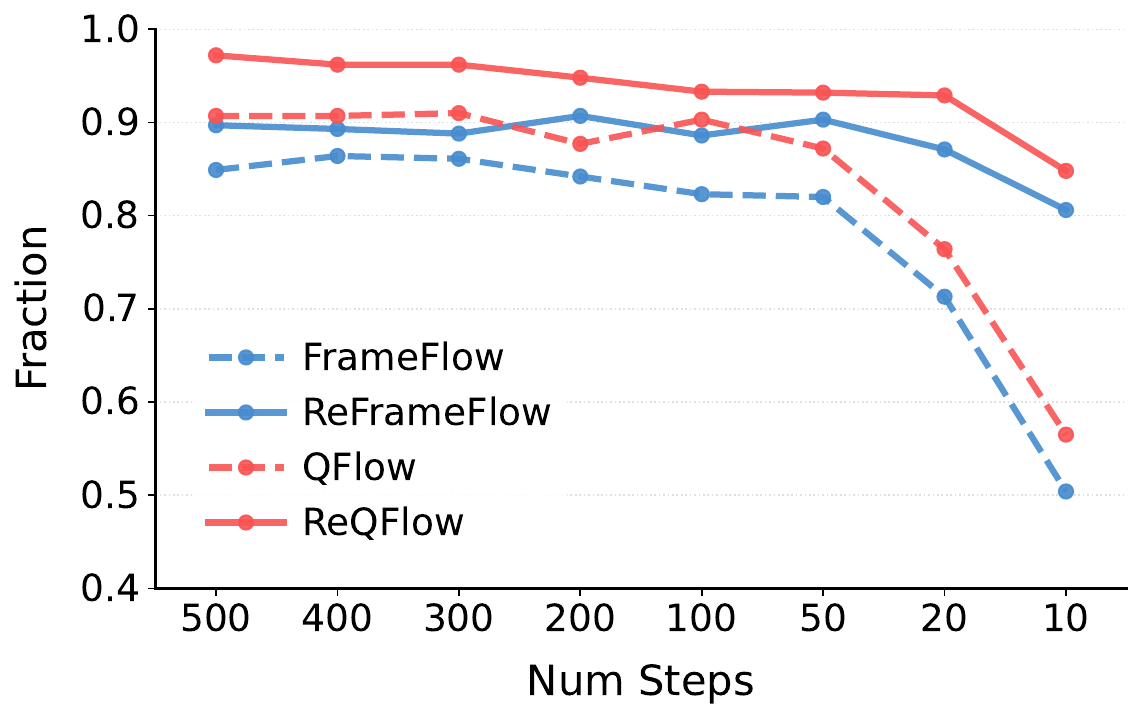}
    \caption{A comparison for various methods on their designability with the reduction of sampling steps. Original data is in Table~\ref{tab:SCOPe table}.}
    \label{fig:SCOPe fig}
\end{figure}

\textbf{Superiority of Exponential-Format SLERP.}
The results in Table~\ref{tab:scope_statistics} and Figure~\ref{fig:SCOPe fig} indicate that QFlow and ReQFlow outperform their corresponding counterparts (FrameFlow and ReFrameFlow) in terms of designability across all sampling steps.
As we analyzed in Section~\ref{sec:rational}, the superiority of our models can be attributed to the better numerical stability of quaternion calculations compared to the traditional matrix geodesic method.

\section{Conclusion and Future Work}
In this study, we propose a rectified quaternion flow matching method for efficient and high-quality protein backbone generation. 
Leveraging quaternion-based representation and flow rectification, our method achieves encouraging performance and significantly reduces inference time.
In the near future, we plan to improve our method for generating high-quality long-chain protein backbones. This will involve constructing a larger training dataset, building on approaches such as Genie2~\cite{lin2024out} and Prote\'{i}na~\cite{geffner2025proteina}. Additionally, we plan to refine our model architecture through two key strategies: increasing model capacity via parameter scaling and exploring non-equivariant design, drawing inspiration from the architecture of Prote\'{i}na~\cite{geffner2025proteina}. Furthermore, we intend to leverage the knowledge embedded in large-scale pre-training models~\cite{li2025large,huguet2024sequence}, such as FoldFlow2~\cite{huguet2024sequence}, which incorporated sequence information using ESM2~\cite{lin2023evolutionary}.
As long-term goals, we will extend our method to conditional protein backbone generation and explore its applications in side-chain generation and full-atom protein generation.

\section*{Acknowledgements}
This work was supported by the National Natural Science Foundation of China (92270110), the Fundamental Research Funds for the Central Universities, the Research Funds of Renmin University of China, and the Public Computing Cloud, Renmin University of China. 
We also acknowledge the support provided by the fund for building world-class universities (disciplines) of Renmin University of China and by the funds from Beijing Key Laboratory of Research on Large Models and Intelligent Governance, Engineering Research Center of Next-Generation Intelligent Search and Recommendation, Ministry of Education, and from Intelligent Social Governance Interdisciplinary Platform, Major Innovation \& Planning Interdisciplinary Platform for the ``Double-First Class'' Initiative, Renmin University of China.

\section*{Impact Statement}
This paper presents work whose aim is to advance the field of Machine Learning and AI for Science, especially the task of protein design. 
There are many potential societal consequences of our work, e.g., accelerating drug development and contributing to healthcare. 
None of them we feel must be specifically highlighted here.


\bibliography{reqflow_arxiv}
\bibliographystyle{icml2025}

\newpage
\appendix
\onecolumn

\section{Proofs of Key Theoretical Results}
\subsection{The Angular Velocity under Exponential Scheduler}
\begin{proposition}
For spherical linear interpolation (SLERP) with angular velocity $\bm{\omega}$, when applying an exponential scheduler during inference:
\begin{equation}
    \bm{q}_t = \bm{q}_0 \otimes \exp\left( (1- e^{-\gamma t}) \log(\bm{q}_0^{-1} \otimes \bm{q}_1)\right),
\end{equation}
the resulting angular velocity evolves as $\hat{\bm{\omega}}_t = \gamma e^{-\gamma t} \bm{\omega}$.
\end{proposition}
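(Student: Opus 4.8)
The plan is to recognize the exponential scheduler as nothing more than a time reparametrization of the constant-speed SLERP path, and then differentiate via the chain rule. Concretely, I would set $\bm{\xi} := \log(\bm{q}_0^{-1} \otimes \bm{q}_1)$, so that by Eq.~\ref{eq:slerp_e} together with Eq.~\ref{eq:log_map} the original (unscheduled) angular velocity is $\bm{\omega} = 2\bm{\xi}$, and write $\tau(t) := 1 - e^{-\gamma t}$. Then the scheduled trajectory reads $\bm{q}_t = \bm{q}_0 \otimes \exp(\tau(t)\,\bm{\xi})$, i.e. it is the curve $s \mapsto \bm{q}_0 \otimes \exp(s\bm{\xi})$ evaluated at $s = \tau(t)$. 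Since $\tau$ is smooth with $\dot{\tau}(t) = \gamma e^{-\gamma t}$, everything reduces to computing the angular velocity of the base curve and rescaling by $\dot{\tau}(t)$.

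\textbf{Key steps.} First I would establish the quaternion analogue of $\tfrac{d}{ds}e^{sa} = a\,e^{sa}$ along a one-parameter subgroup: writing $\|\bm{\xi}\| = \phi/2$ and $\bm{u} = \bm{\xi}/\|\bm{\xi}\|$, Eq.~\ref{eq:exp_map} gives $\exp(s\bm{\xi}) = [\cos(s\phi/2),\,\sin(s\phi/2)\,\bm{u}^\top]^\top$, whose $s$-derivative is $\tfrac{\phi}{2}[-\sin(s\phi/2),\,\cos(s\phi/2)\,\bm{u}^\top]^\top$; a short computation with the Hamilton product (Eq.~\ref{eq:hamilton}) identifies this with $\exp(s\bm{\xi}) \otimes [0,\bm{\xi}^\top]^\top$. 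Next, applying the chain rule to $\bm{q}_t = \bm{q}_0 \otimes \exp(\tau(t)\bm{\xi})$ and using associativity of $\otimes$, I obtain $\dot{\bm{q}}_t = \dot{\tau}(t)\, \bm{q}_t \otimes [0,\bm{\xi}^\top]^\top$. Finally, matching this against the defining relation $\dot{\bm{q}}_t = \tfrac12\, \bm{q}_t \otimes [0,\hat{\bm{\omega}}_t^\top]^\top$ for the instantaneous angular velocity (the generator of the one-step Euler update in Eq.~\ref{eq:infer_slerp}) yields $\hat{\bm{\omega}}_t = 2\dot{\tau}(t)\,\bm{\xi} = \dot{\tau}(t)\,\bm{\omega} = \gamma e^{-\gamma t}\,\bm{\omega}$. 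As a sanity check, substituting this $\hat{\bm{\omega}}_t$ back into the exponential update $\bm{q}_{t+\Delta t} = \bm{q}_t \otimes \exp(\tfrac12 \Delta t\,\hat{\bm{\omega}}_t)$ reproduces Eq.~\ref{eq:infer_slerp} exactly.

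\textbf{Main obstacle.} This proposition carries no serious difficulty; the only things demanding care are consistent bookkeeping of the half-angle (factor-of-two) convention linking $\bm{\xi} = \log(\bm{q}_0^{-1}\otimes\bm{q}_1)$, the angular velocity $\bm{\omega} = 2\bm{\xi}$, and the exponential map $\exp(\tfrac12\bm{\omega})$ used throughout the paper, and resisting the temptation to treat quaternion multiplication as commutative when differentiating $\exp(s\bm{\xi})$ — which is precisely why the derivative identity in the first step must be verified directly from the Hamilton product rather than by analogy with the scalar case.
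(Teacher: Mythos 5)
Your proposal is correct and follows essentially the same argument as the paper: differentiate the scheduled SLERP curve via the chain rule with $\tau(t)=1-e^{-\gamma t}$ and read off the angular velocity from the quaternion kinematics relation, obtaining $\hat{\bm{\omega}}_t=\dot{\tau}(t)\,\bm{\omega}=\gamma e^{-\gamma t}\bm{\omega}$. The one difference is bookkeeping of conventions: you match against the right-multiplicative (body-frame) relation $\dot{\bm{q}}_t=\tfrac12\,\bm{q}_t\otimes[0,\hat{\bm{\omega}}_t^\top]^\top$, which is exactly the form generated by the Euler update in Eq.~\ref{eq:infer_slerp}, and you keep the generator $\bm{\xi}=\log(\bm{q}_0^{-1}\otimes\bm{q}_1)$ on the right where it legitimately commutes with its own exponential; the paper instead uses the left-multiplicative form $\dot{\bm{q}}=\tfrac12[0,\bm{\omega}^\top]^\top\otimes\bm{q}$ and slides $\log(\bm{q}_{\text{rel}})$ to the left of $\bm{q}_0$, a step that is only cosmetically different here but would require $\bm{q}_0$ to commute with $\bm{\xi}$ if read literally in the world frame — so your explicit verification of $\tfrac{d}{ds}\exp(s\bm{\xi})=\exp(s\bm{\xi})\otimes[0,\bm{\xi}^\top]^\top$ and your refusal to commute across $\bm{q}_0$ make your version, if anything, the tighter one, while reaching the same conclusion.
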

\begin{proof}
The standard SLERP formulation in exponential form is:
\begin{equation}
    \bm{q}_t = \bm{q}_0 \otimes \exp\left( t \log(\bm{q}_0^{-1} \otimes \bm{q}_1)\right),
\end{equation}
where the relative rotation $\bm{q}_{\text{rel}} = \bm{q}_0^{-1} \otimes \bm{q}_1$ has logarithm map $\log(\bm{q}_{\text{rel}}) = \frac{1}{2}\phi\bm{u}$. The angular velocity is:
\begin{equation}\label{eq:app_omega}
    \bm{\omega} = 2 \cdot \log(\bm{q}_{\text{rel}}) = \phi\bm{u}.
\end{equation}
Introducing an exponential scheduler $\kappa(t) = 1 - e^{-\gamma t}$ with derivative $\kappa'(t) = \gamma e^{-\gamma t}$, the modified SLERP becomes:
\begin{equation}
    \bm{q}_t = \bm{q}_0 \otimes \exp\left( \kappa(t) \log(\bm{q}_{\text{rel}})\right).
\end{equation}
Differentiating with respect to time using the chain rule:
\begin{eqnarray}
\begin{aligned}
    \dot{\bm{q}}_t &= \bm{q}_0 \otimes \frac{d}{dt}\exp\left( \kappa(t)\log(\bm{q}_{\text{rel}})\right) \\
    &= \gamma e^{-\gamma t} \log(\bm{q}_{\text{rel}}) \otimes \bm{q}_0 \otimes \exp\left( \kappa(t)\log(\bm{q}_{\text{rel}})\right) \\
    &= \gamma e^{-\gamma t} \log(\bm{q}_{\text{rel}}) \otimes \bm{q}_t.
\end{aligned}
\end{eqnarray}
Applying the quaternion kinematics equation $\dot{\bm{q}} = \frac{1}{2}[0, \bm{\omega}^\top]^\top \otimes \bm{q}$ \cite{sola2017quaternion}, we solve for the effective angular velocity:
\begin{eqnarray}
\begin{aligned}
    [0, \hat{\bm{\omega}}_t^\top]^\top&= 2\dot{\bm{q}}_t \otimes \bm{q}_t^{-1} \\
    &= 2\gamma e^{-\gamma t} \log(\bm{q}_{\text{rel}}) \otimes \bm{q}_t \otimes \bm{q}_t^{-1} \\
    &= 2\gamma e^{-\gamma t} \log(\bm{q}_{\text{rel}}).
\end{aligned}
\end{eqnarray}
Substituting the angular velocity from Eq.~\eqref{eq:app_omega} yields:
\begin{equation}
    \hat{\bm{\omega}}_t = \gamma e^{-\gamma t} \bm{\omega}.
\end{equation}
\end{proof}

\subsection{Proofs of The Theorems in Section~\ref{sec:reflow}}
Our proofs yield the same pipeline used in~\cite{liu2022rectified}. 
The proofs are inspired by that work and derived based on the same techniques. 
What we did is extending and specifying the theoretical results in~\cite{liu2022rectified} for $\mathbb{S}^3$.
The original rotation process is $\{\bm{q}_t\}_{t\in[0,1]}$, where each $\bm{q}_t$ is a unit quaternion representating a rotation in $\text{SO}(3)$, $\bm{\omega}_t \in \mathbb{R}^3$ is the angular velocity at time $t$. The quaternion dynamics are given by
\begin{equation}\label{eq:app_dynamics}
    \dot{\bm{q}}_t = \frac{1}{2} [0, \bm{\omega}_t^\top]^\top \otimes \bm{q}_t \in T_{\bm{q}_t}(\mathbb{S}^3),
\end{equation}
where $T_{\bm{q}_t}(\mathbb{S}^3)$ is the tangent space at $\bm{q}_t$. We write $\bm{q}_0 \sim \mathcal{Q}_0$, $\bm{q}_1 \sim \mathcal{Q}_1$ for the initial and target distributions. For a given input coupling $(\bm{q}_0, \bm{q}_1)$, the exact minimum of $\mathcal{L}_{\text{SO}(3)}$ in Eq.~\eqref{eq:objs} is achieved if
\begin{equation}\label{eq:app_conditional}
    \tilde{\bm{\omega}}_{\theta, t} = \tilde{\bm{\omega}}_t(\bm{q}, t) =\mathbb{E}[\bm{\omega}_t|\bm{q}_t = \bm{q}] \in \mathbb{R}^3,
\end{equation}
which is the expected angular velocity at point $\bm{q}$, time $t$.
We now define the rectified process $\{\bm{q}_t^{\prime}\}_{t\in[0,1]}$ by
\begin{equation}\label{eq:app_dynamics_rectified}
    \dot{\bm{q}}_t^{\prime} = \frac{1}{2} [0, \tilde{\bm{\omega}}_t(\bm{q}_t^{\prime}, t)^\top]^\top \otimes \bm{q}_t^{\prime}, \quad \bm{q}_0^{\prime} \sim \mathcal{Q}_0,
\end{equation}
\subsubsection{Proof of Theorems~\ref{theo:marginal}}
\begin{proof}
Consider any smooth test function $h: \mathbb{S}^3 \rightarrow \mathbb{R}$. By chain rule:
\begin{equation}
\frac{d}{dt}\mathbb{E}[h(\mathbf{q}_t)] =\mathbb{E}\bigl[\nabla_{\mathbb{S}^3}h(\bm{q}_t)\cdot\dot{\bm{q}}_t\bigr],
\end{equation}
where $\nabla_{\mathbb{S}^3}h$ is the gradient on the manifold. From the definition in Eq.~\eqref{eq:app_dynamics}, since $\bm{\omega}_t$ is random, we rewrite inside the expectation by conditioning on $\bm{q}_t$:
\begin{equation}
\mathbb{E}\bigl[\nabla_{\mathbb{S}^3}h(\bm{q}_t)\cdot\dot{\bm{q}}_t\bigr]=
\mathbb{E} \Bigl[\nabla_{\mathbb{S}^3}h(\bm{q}_t) \cdot \tfrac12\bigl[0,\mathbb{E}(\bm{\omega}_t |\bm{q}_t)^\top\bigr]^\top\otimes\bm{q}_t \Bigr],
\end{equation}
because $\bm{\omega}_t | (\bm{q}_t = \bm{q}) $ has conditional mean $\tilde{\bm{\omega}}_t(\bm{q}, t)$, 
\begin{equation}
\frac{d}{dt}\,\mathbb{E}[h(\bm{q}_t)]=
\mathbb{E}\bigl[\nabla_{\mathbb{S}^3}h(\bm{q}_t)\,\cdot \tfrac12 [0,\tilde{\bm{\omega}}_t(\bm{q}_t, t)^\top]^\top\otimes\bm{q}_t\bigr].
\end{equation}
This evolution is exactly the \emph{weak (distributional) form} of the continuity equation:
\begin{equation}
 \partial_t\,\mu_t+
 \nabla \cdot \bigl(\tfrac12[0,\tilde{\bm{\omega}}_t(\bm{q}, t)^\top]^\top \otimes \bm{q} \cdot \mu_t\bigr)=0,
\end{equation}
where $\mu_t = \text{Law}(\bm{q}_t)$. According to Eq.~\eqref{eq:app_dynamics_rectified}, That is exactly the same weak‐form evolution equation satisfied by the $\bm{q}_t^{\prime}$ process, where $\bm{\omega}$ is simply replaced by $\tilde{\bm{\omega}}_t$. If we let $\nu_t \mathrel{\mathop:}=\text{Law}(\bm{q}_t^\prime)$, it solves the same continuity equation with the same initial data $\nu_0 = \mu_0$. On a compact manifold like $\text{SO}(3)$, the continuity equation has a unique solution given an initial distribution. Hence $\mu_t=\nu_t$ at all times $t$. That is,
\begin{equation}
 \mathrm{Law}(\bm{q}_t^\prime)=
 \mathrm{Law}(\bm{q}_t),
 \quad
 \text{for all }t\in [0,1].
\end{equation}
\end{proof}

\subsubsection{Proof of Theorems~\ref{theo:cost}}
\begin{proof}
The net rotation from $\bm{q}_0$ to $\bm{q}_1$ can be given by integrating the angular velocity $\bm{\omega}_t \in \mathbb{R}^3$.
\begin{equation}
\log\bigl(\bm{q}_0^{-1}\otimes \bm{q}_1\bigr) = \frac{1}{2}\int_{0}^{1} \bm{\omega}_t\,dt,
\end{equation}
and similarly,
\begin{equation}
\log\bigl(\bm{q}_0^{\prime-1}\otimes \bm{q}_1^\prime\bigr) = \frac{1}{2}\int_{0}^{1} \tilde{{\bm{\omega}}}_t(\bm{q}_t^{\prime}, t)\,dt,
\end{equation}
Strictly speaking, one must keep track of the axis direction to ensure consistency, but the geodesic assumption here handles that. 
The rectified angular velocity $\tilde{\bm{\omega}}_t = \mathbb{E}[\bm{\omega}_t | \bm{q}_t]$ implies that the total rotation in the rectified process is a conditional expectation of the original rotation:
\begin{equation}\label{eq:app_conditional_expectation}
\log\left(\bm{q}_0^{\prime-1} \otimes \bm{q}_1^\prime\right) = \frac{1}{2}\int_0^1 \tilde{\bm{\omega}}_t \, dt = \frac{1}{2}\mathbb{E}\left[\int_0^1 \bm{\omega}_t \, dt \,\bigg|\, \{\bm{q_t^{\prime}}\}\right].
\end{equation}

Applying Jensen's inequality to the convex cost \(c\) over this conditional expectation:
\begin{equation}
c\left(\log\left(\bm{q}_0^{\prime-1} \otimes \bm{q}_1^\prime\right)\right) = c\left(\frac{1}{2}\mathbb{E}\left[\int_0^1 \bm{\omega}_t \, dt \,\bigg|\, \{\bm{q}_t^{\prime}\}\right]\right) \leq \mathbb{E}\left[\frac{1}{2}c\left(\int_0^1 \bm{\omega}_t \, dt\right) \,\bigg|\, \{\bm{q^{\prime}}_t\}\right].
\end{equation}
Taking the total expectation on both sides:
\begin{equation}
\mathbb{E}\left[C(\bm{q}_0^{\prime}, \bm{q}_1^{\prime})\right] \leq \mathbb{E}\left[ \frac{1}{2}c\left(\int_0^1 \bm{\omega}_t \, dt\right)\right] = \mathbb{E}\left[C(\bm{q}_0, \bm{q}_1)\right].
\end{equation}
This final inequality establishes that the rectified coupling $(\bm{q}_0^{\prime}, \bm{q}_1^{\prime})$ achieves equal or lower expected transport cost than the original coupling $(\bm{q}_0, \bm{q}_1)$.

\end{proof}

\subsubsection{Proof of Corollary~\ref{cor:nonconstant_speed}}
\begin{proof}
Suppose the original process has the nonconstant angular velocity $\bm{\omega}_t = a(t)\bm{u}$ (fixed axis), with $\tau = \frac{1}{2}\int_0^1 a(t) dt$.
\begin{equation}
\log\bigl(\bm{q}_0^{-1}\otimes \bm{q}_1\bigr) = \frac{1}{2}\int_{0}^{1} \bm{\omega}_t\,dt = \frac{1}{2}\bm{u} \int_0^1a(t)\,dt = \tau \bm{u}
\end{equation}
Recall that the rectified angular velocity is:
\begin{equation}
    \tilde{\bm{\omega}}_t(\bm{q}, t) = \mathbb{E}[\bm{\omega}_t | \bm{q}_t]
\end{equation}
Since $\bm{\omega}_t = a(t)\bm{u}$, we simply get:
\begin{equation}
\tilde{\bm{\omega}}_t(\bm{q}, t) = \mathbb{E}[a(t) | \bm{q}_t] \bm{u}
\end{equation}
The total rotation from $\bm{q}_0^{\prime}$ to the $\bm{q}_1^{\prime}$ in the rectified process satisfies:
\begin{equation}
\log(\bm{q}_0^{\prime-1} \otimes \bm{q}_1^\prime) = \frac{1}{2}\int_0^1 \tilde{\bm{\omega}}_t(\bm{q}_t^{\prime}, t) dt = \frac{1}{2}\left(\int_0^1 \mathbb{E}[a(t) \,|\, \bm{q}_t^{\prime}] dt\right)\bm{u}.
\end{equation}
Let $\tau' = \frac{1}{2}\int_0^1 \mathbb{E}[a(t) \,|\, \bm{q}_t^{\prime}] dt$. Thus,
\begin{equation}
\log(\bm{q}_0^{\prime-1} \otimes \bm{q}_1^\prime) = \tau' \bm{u}
\end{equation}
Because $\tau = \frac{1}{2}\int_0^1 a(t) dt$, $\tau' = \frac{1}{2}\int_0^1 \mathbb{E}[a(t) \,|\, \bm{q}_t] dt$, and Eq.~\eqref{eq:app_conditional_expectation} in Theorem~\ref{theo:cost}, we note
\begin{equation}
    \tau' \bm{u} = \frac{1}{2} \bm{u}\left(\int_0^1 \mathbb{E}[a(t) \,|\, \bm{q}_t^{\prime}] dt\right) =\frac{1}{2}\mathbb{E}\left[\int_0^1 a(t) \bm{u} \, dt \,\bigg|\, \{\bm{q_t^{\prime}}\}\right] = \mathbb{E}[\tau\bm{u}|\{\bm{q_t^{\prime}}\}]
\end{equation}
For the coupling $(\bm{q}_0^{\prime}, \bm{q}_1^{\prime})$, the cost is:
\begin{equation}
    C(\bm{q}_0^{\prime}, \bm{q}_1^{\prime}) = c(\tau'\bm{u}).
\end{equation}
Since $\tau' \bm{u} = \mathbb{E}[\tau\bm{u}|\{\bm{q_t^{\prime}}\}]$, convexity of $c$ implies Jensen’s inequality in conditional form:
\begin{equation}
    c(\tau'\bm{u}) = c(\mathbb{E}[\tau\bm{u}|\{\bm{q_t^{\prime}}\}]) \leq \mathbb{E}[c(\tau\bm{u})|\{\bm{q_t^{\prime}}\}]
\end{equation}
Next, take unconditional expectation on both sides. By the law of total expectation (tower property),
\begin{equation}
    \mathbb{E}[c(\tau'\bm{u})] \leq \mathbb{E}[c(\tau\bm{u})].
\end{equation}
Since $c(\tau \bm{u}) = c(\log\bigl(\bm{q}_0^{-1}\otimes \bm{q}_1\bigr)) = C(\bm{q}_0, \bm{q}_1)$ and $c(\tau' \bm{u})=C(\bm{q}_0^{\prime}, \bm{q}_1^{\prime})$. Therefore,
\begin{equation}
    \mathbb{E}[C(\bm{q}_0^{\prime}, \bm{q}_1^{\prime})] \leq \mathbb{E}[C(\bm{q}_0, \bm{q}_1)].
\end{equation}

\end{proof}

\section{Implementation Details}

\subsection{Ensuring The Shortest Geodesic Path on $\text{SO}(3)$}
When we interpolate two quaternions by using SLERP in an exponential format (Eq.~\eqref{eq:slerp_e}), 
due to the double-cover property of quaternions (where every 3D rotation is represented by two antipodal unit quaternions), 
it is possible that the inner product $\langle \bm{q}_0, \bm{q}_1\rangle < 0$, which means that $\bm{q}_0$ and $\bm{q}_1$ lie in opposite hemispheres.
In such a situation, we apply $-\bm{q}_1$ in Eq.~\eqref{eq:slerp_e}, ensuring the shortest geodesic path on $\text{SO}(3)$. 

\subsection{Auxiliary Loss}\label{ap:aux loss}
We adopt the auxiliary loss from~\cite{yim2023se} to discourage physical violations such as chain breaks or steric clashes. Let $\mathcal{A} = [\mathrm{N}, \mathrm{C}_\alpha, \mathrm{C}, \mathrm{O}]$ be the collection of backbone atoms. The first term penalizes deviations in backbone atom coordinates:
\begin{equation}
\mathcal{L}_{\text{bb}} = \frac{1}{4N} \sum_{n=1}^N \sum_{a \in \mathcal{A}} \left\| a_n - \hat{a}_n \right\|^2,
\end{equation}
where $a_n$ is the ground-truth atom position, $\hat{a}_n$ is our predicted position, $N$ represents the number of residues.
The second loss is a local neighborhood loss on pairwise atomic distances,
\begin{equation}
\mathcal{L}_{\text{dis}} = \frac{1}{Z} \sum_{n,m=1}^N \sum_{a,b \in \mathcal{A}} \mathbf{1}\{d_{ab}^{nm} < 0.6\} \|d_{ab}^{nm} - \hat{d}_{ab}^{nm}\|^2,
\end{equation}
\begin{equation}
Z = \left(\sum_{n,m=1}^N \sum_{a,b \in \mathcal{A}} \mathbf{1}\{d_{ab}^{nm} < 0.6\}\right) - N,
\end{equation}
where $d_{ab}^{nm} = \|a_n - b_m\|$ and $\hat{d}_{ab}^{nm} = \|\hat{a}_n - \hat{b}_m\|$ represent true and predicted inter-atomic distances between atoms $a, b \in \mathcal{A}$ for residue $n$ and $m$.
$\mathbf{1}$ is an indicator, signifying that only penalize atoms within 0.6nm($6\text{\AA}$). The full auxiliary loss can be written as
\begin{equation}
\mathcal{L}_{\text{aux}} = \mathcal{L}_{\text{bb}} + \mathcal{L}_{\text{dis}}.
\end{equation}

\subsection{The Schemes of Training and Inference Algorithms}\label{app:alg}
The schemes of our training and inference algorithms are shown below.

\begin{algorithm}
\caption{Training Procedure of QFlow}
\label{alg:qflow_training}
\begin{algorithmic}[1]
\setlength{\baselineskip}{1.10\baselineskip} 
\REQUIRE Training dataset $\mathrm{T}_1^{\mathcal{D}} = \bigl\{\{\mathrm{T^{j}_1 = (\bm{x}_1^j, \bm{q}_1^j)}\}_{j=1}^{N_i}\bigr\}_{i=1}^D$, model $\mathcal{M}_\theta$, number of epochs $N$
\STATE Initialize model parameters $\theta$
\FOR{epoch $= 1$ to $N$}
    \FOR{each mini-batch $\mathrm{T}_1^\mathcal{B} \subset \mathrm{T}_1^\mathcal{D}$}
        \STATE Sample $t^{\mathcal{B}}\sim \mathcal{U}[0, 1]$ , $\mathrm{T}_0^{\mathcal{B}} \sim \mathcal{T}_0 \times \mathcal{Q}_0$
        \STATE Interpolate translations: $\bm{x}_t^{\mathcal{B}} = \text{Linear}(\bm{x}_0^{\mathcal{B}} , \bm{x}_1^{\mathcal{B}}, t^{\mathcal{B}})$ \hfill Eq.~\eqref{eq:linear}
        \STATE Interpolate rotations: $\bm{q}_t^{\mathcal{B}} = \text{SLERP-Exp}(\bm{q}_0^\mathcal{B},\bm{q}_1^\mathcal{B}, t^{\mathcal{B}})$ \hfill Eq.~\eqref{eq:slerp_e}
        \STATE Predict targets: $\bm{x}_{\theta, 1}^{\mathcal{B}}, \bm{q}_{\theta, 1}^{\mathcal{B}} = \mathcal{M}_{\theta}(\mathrm{T}_t^{\mathcal{B}}, t^{\mathcal{B}})$
        \STATE Compute loss $\mathcal{L}(\theta; \bm{x}_t^{\mathcal{B}}, \bm{q}_t^{\mathcal{B}}, \bm{x}_{\theta, 1}^{\mathcal{B}}, \bm{q}_{\theta,1}^{\mathcal{B}},t^{\mathcal{B}})$ \hfill Eq.~\eqref{eq:loss}
        \STATE Compute gradient $\nabla_\theta \mathcal{L}$
        \STATE Update parameters: $\theta \gets \theta - \eta \nabla_\theta \mathcal{L}$
    \ENDFOR
\ENDFOR
\STATE\textbf{Return:} Trained model parameters $\theta^*$
\end{algorithmic}
\end{algorithm}

\begin{algorithm}
\caption{Inference}
\label{alg:inference}
\begin{algorithmic}[1]
\setlength{\baselineskip}{1.10\baselineskip} 
\REQUIRE Trained model $\mathcal{M}_\theta$, noise $\mathrm{T}_0 \sim \mathcal{T}_0 \times \mathcal{Q}_0$, number of steps $L$, rotation acceleration constant $\gamma$
\STATE Initialize $t = 0$, $\Delta t = \frac{1}{L}$
\FOR{step $= 1$ to $L$}
    \STATE Predict targets: $\bm{x}_{\theta, 1}, \bm{q}_{\theta, 1} = \mathcal{M}_{\theta}(\mathrm{T}_t, t)$
    \STATE Compute velocity: $\bm{v}_{\theta, t}$, $\bm{\omega}_{\theta, t}$ \hfill Eq.~\eqref{eq:velocity}
    \STATE Update translations: $\bm{x}_{t + \Delta t} \leftarrow \bm{x_t} + \bm{v}_{\theta, t} \cdot \Delta t$ \hfill Eq.~\eqref{eq:euler_trans}
    \STATE Update rotations: $  \bm{q}_{t + \Delta t} \leftarrow \bm{q}_{t} \otimes \exp\Bigl(\frac{1}{2}\Delta t \cdot \gamma e^{-\gamma t} \bm{\omega}_{\theta, t}\Bigr)$ \hfill Eq.~\eqref{eq:infer_slerp}
    \STATE Update states: $t \leftarrow t + \Delta t$, $\mathrm{T}_t \leftarrow \mathrm{T}_{t + \Delta t}$
\ENDFOR
\STATE\textbf{Return:} Generated backbone frame $\mathrm{T}_1$
\end{algorithmic}
\end{algorithm}

\begin{algorithm}
\caption{Training Procedure of ReQFlow}
\label{alg:recqflow_training}
\begin{algorithmic}[1]
\setlength{\baselineskip}{1.10\baselineskip} 
\REQUIRE Trained QFlow model $\mathcal{M}_\theta$, number of epochs $N$
\STATE Sample noise $\mathrm{T}_0^{\prime\mathcal{D}} \sim \mathcal{T}_0 \times \mathcal{Q}_0$
\STATE Create flow rectification pairs: $(\mathrm{T_0^{\prime }, \mathrm{T}_1^{\prime}})^{\mathcal{D}}$ \hfill Alg.~\ref{alg:inference}
\FOR{epoch $= 1$ to $N$}
    \FOR{each mini-batch $(\mathrm{T_0^{\prime }, \mathrm{T}_1^{\prime}})^{\mathcal{B}} \subset (\mathrm{T_0^{\prime }, \mathrm{T}_1^{\prime}})^{\mathcal{D}}$}
        \STATE Sample $t^{\mathcal{B}}\sim \mathcal{U}[0, 1]$
        \STATE Interpolate translations: $\bm{x}_t^{\mathcal{\prime B}} = \text{Linear}(\bm{x}_0^{\mathcal{\prime B}} , \bm{x}_1^{\mathcal{\prime B}}, t^{\mathcal{B}})$ \hfill Eq.~\eqref{eq:linear}
        \STATE Interpolate rotations: $\bm{q}_t^{\mathcal{\prime B}} = \text{SLERP-Exp}(\bm{q}_0^\mathcal{\prime B},\bm{q}_1^\mathcal{\prime B}, t^{\mathcal{B}})$ \hfill Eq.~\eqref{eq:slerp_e}
        \STATE Predict targets: $\bm{x}_{\theta, 1}^{\mathcal{\prime B}}, \bm{q}_{\theta, 1}^{\mathcal{\prime B}} = \mathcal{M}_{\theta}(\mathrm{T}_t^{\mathcal{\prime B}}, t^{\mathcal{B}})$
        \STATE Compute loss $\mathcal{L}(\theta; \bm{x}_t^{\prime \mathcal{B}}, \bm{q}_t^{\mathcal{\prime B}}, \bm{x}_{\theta, 1}^{\mathcal{\prime B}}, \bm{q}_{\theta,1}^{\mathcal{\prime B}})$ \hfill Eq.~\eqref{eq:loss}
        \STATE Compute gradient $\nabla_\theta \mathcal{L}$
        \STATE Update parameters: $\theta \gets \theta - \eta \nabla_\theta \mathcal{L}$
    \ENDFOR
\ENDFOR
\STATE\textbf{Return:} Trained model parameters $\theta^*$
\end{algorithmic}
\end{algorithm}

\subsection{Data Statistics and Hyperparameter Settings}

We follow~\cite{yim2023se} to construct PDB dataset. The dataset was downloaded on December 17, 2024. We then applied a length filter (60–512 residues) and a resolution filter ($<$ 5 Å) to select high-quality structures. To further refine the dataset, we processed each monomer using DSSP~\cite{kabsch1983dictionary}, removing those with more than 50\% loops to ensure high secondary structure content. After filtering, 23,366 proteins remained for training.
We directly use the SCOPe dataset preprocessed by~\cite{yim2023fast} for training, which consists of 3,673 proteins after filtering. The distribution of dataset length is shown on Figure~\ref{fig:dataset distribution}.

\begin{figure}
    \centering
    \includegraphics[width=0.95\linewidth]{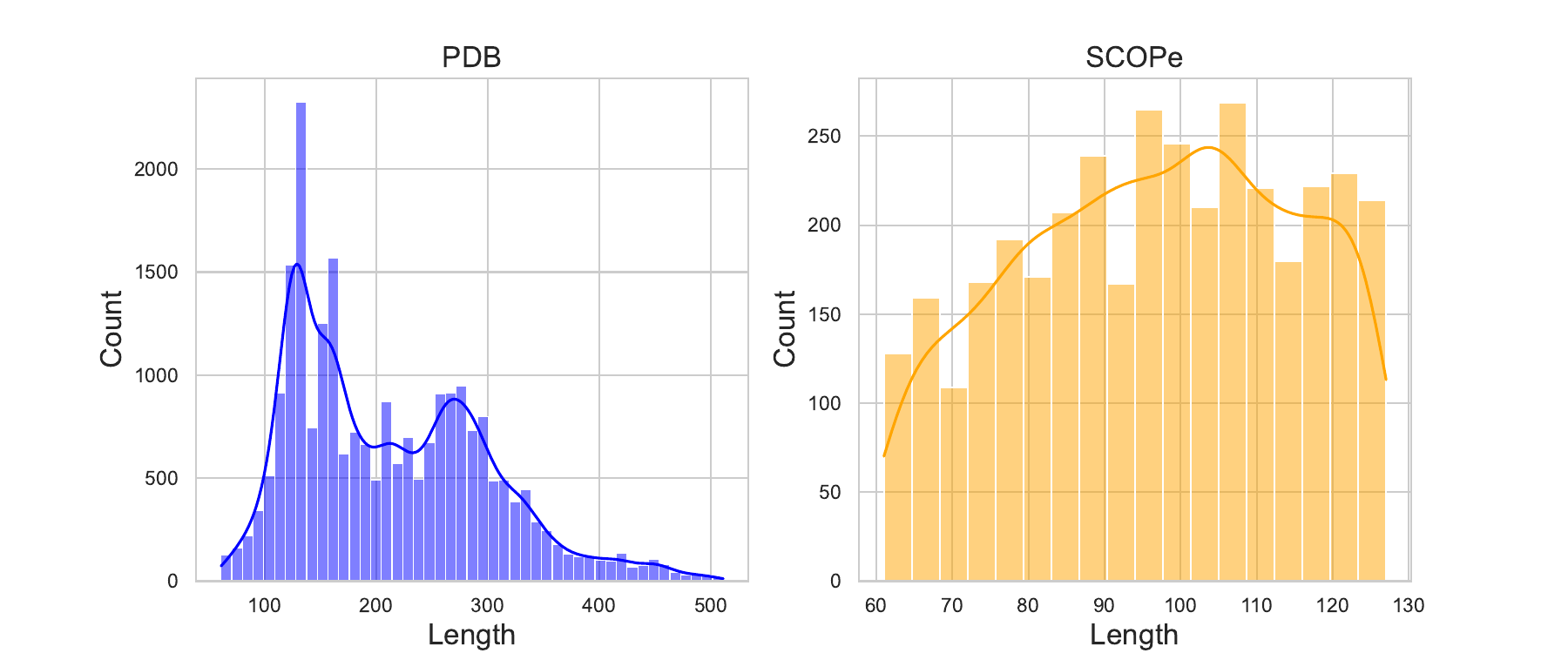}
    \caption{The length distribution of PDB and SCOPe dataset we use for training.}
    \label{fig:dataset distribution}
\end{figure}

When conducting reflow, we first generated a large amount of data to create the training dataset and then applied filtering to refine it. The filtering criteria were as follows: for proteins with lengths $\leq$ 400, we selected samples with scRMSD $\leq$ 2; for proteins with lengths $\geq$ 400, we included samples with either scRMSD $\leq$ 2 or TM-score $\geq$ 0.9. 
We also remove those with more than 50\% loop and those with max 4\% radius gyration. For the PDB dataset, we generated 20 proteins for each length in $\{60, 61, \dots, 512\}$, resulting in a reflow dataset containing 7,653 sample-noise pairs. For the SCOPe dataset, we generated 50 proteins for each length in $\{60, 61, \dots, 128\}$, producing a reflow dataset with 3,167 sample-noise pairs.


\subsection{Metrics}
Following existing work\cite{geffner2025proteina,yim2023se,yim2023fast, bose2023se, huguet2024sequence}, we apply the metrics below to evaluate various methods.

\textbf{Designability}.We use this metric to evaluate whether a protein backbone can be formed by folding an amino acid chain. As shown in Figure~\ref{fig:designability_compute}, for each backbone, we generate 8 sequences with ProteinMPNN\cite{dauparas2022robust} at temperature 0.1, and predict their corresponding structures using ESMFold\cite{lin2023evolutionary}. Then we compute the minimum RMSD (known as scRMSD) between the predicted structures and the backbone sampled by the model. The designability score (denoted as ``fraction'' in this work) is the percentage of samples satisfying scRMSD
$<$ 2\AA.

\textbf{Diversity}. This metric quantifies the diversity of the generated backbones. This involves calculating the average pairwise structural similarity among designable samples, broken down by protein length. Specifically, for each length 
 under consideration, let 
 be the set of designable structures. We compute 
 for all distinct pairs 
 within 
. The mean of these TM-scores represents the diversity for length 
. The final diversity score is the average of these means across all tested lengths 
. Since TM-scores closer to 1 indicate higher similarity, superior diversity is reflected by lower values of this aggregated score.

\textbf{Novelty}. We evaluate the structural novelty by finding the maximum TM-score between a generated structure and any structure in the Protein Data Bank (PDB), using Foldseek\cite{van2022foldseek}. A lower resulting maximum TM-score signifies a more novel structure. The command\cite{geffner2025proteina} utilized for this Foldseek search is configured as follows:

\begin{tcolorbox}[
    colframe=black,    
    colback=white,     
    boxrule=0.5pt,     
    arc=0mm,           
    boxsep=5pt,        
    verbatim           
]
\begin{verbatim}
foldseek easy-search <pdb_path> <database> <aln_file> <tmp_folder> 
--alignment-type 1 \
--exhaustive-search \
--max-seqs 10000000000 \
--tmscore-threshold 0.0 \
--format-output query,target,alntmscore,lddt,evalue
\end{verbatim}
\end{tcolorbox}

According to the issue of FoldSeek mentioned in \url{https://github.com/steineggerlab/foldseek/issues/323}, we use the E-value column to report the TM-score. 

\textbf{Efficiency}. To ensure fairness, we measure inference time on idle GPU and CPU systems. For PDB-based models, we sampled 50 proteins of length 300 and reported the mean sampling time. Similarly, for SCOPe-based models, we sampled 50 proteins of length 128 and reported the mean sampling time. File saving and self-consistency calculations were excluded from the timing.

\subsection{Baselines}
We compare our work with state-of-the-art methods in the community, including Genie2, RFdiffusion, FoldFlow/FoldFlow2, FrameFlow, and FrameDiff. We use the default checkpoints and parameters provided in these methods' repositories for our comparisons.

\section{More Experimental Details}\label{app:exp}

\subsection{Hyperparameter Settings}
We adopt the the same hyperparameter settings as FrameFlow for a fair comparison, and the key parameters are shown in Table~\ref{tab:hyperparameter}.
\begin{table}[t]
    \centering
    \caption{Training Hyperparameters}
    \small
    \begin{tabular}{ll}
        \toprule
        \textbf{Hyperparameters} & \textbf{Value} \\
        \midrule
        aux\_loss\_t\_pass (time threshold) & PDB=0.5, SCOPe=0.25 \\
        aux\_loss\_weight & 1.0 \\
        batch size & 128 \\
        max\_num\_res\_squared & PDB=1000000, SCOPe=500000 \\
        max epochs & 1000 \\
        learning rate & 0.0001 \\
        interpolant\_min\_t & 0.01 \\
        \bottomrule
    \end{tabular}
    \label{tab:hyperparameter}
\end{table}

\subsection{Checkpoint selection strategy}\label{app:ckpt_select}
For each method, after observing loss convergence, we select checkpoints based on the metrics of the generated protein validation set. We choose the checkpoint where the ca\_ca\_valid\_percent $>$ 0.99 and the proportions of secondary structures are closest to the dataset's average values.

\subsection{Detailed Speed Comparison}
\label{app:speed}
As shown in Table~\ref{tab:speed}, we record the runtime (second) on generating a protein of length 300 in the PDB experiment and length 128 in the SCOPe experiment for a detailed speed comparison. The neural network feedforward computation is the main computational bottleneck. However, the quaternion operations are 15$\sim$20\% faster than rotation matrix-based operations (see the Rotation Update column).
\begin{table}[t]
    \centering
    \caption{Computation time breakdown (in seconds) for different methods, datasets, and sampling steps. The time corresponds to generating a backbone with length $N=128$ for SCOPe and $N=300$ for PDB.} 
    \label{tab:computation_time_breakdown}
    \small{%
    \tabcolsep=4pt 
    \begin{tabular}{llccccc}
            \toprule
            \textbf{Datasets} & \textbf{Methods} & \textbf{Steps} & \textbf{Model Prediction} & \textbf{Rotation Update} & \textbf{Translation Update} & \textbf{Total Time} \\
            \midrule
            \multirow{6}{*}{PDB} & FrameFlow & 500 & 16.308$_{\pm\text{0.093}}$ & 0.608$_{\pm\text{0.005}}$ & 0.033$_{\pm\text{0.000}}$ & 17.053$_{\pm\text{0.099}}$ \\
                                 &                            & 50  & 1.609$_{\pm\text{0.013}}$  & 0.059$_{\pm\text{0.001}}$ & 0.003$_{\pm\text{0.000}}$ & 1.727$_{\pm\text{0.014}}$  \\
                                 &                            & 20  & 0.635$_{\pm\text{0.008}}$  & 0.024$_{\pm\text{0.001}}$ & 0.001$_{\pm\text{0.000}}$ & 0.713$_{\pm\text{0.010}}$  \\
            \cmidrule(lr){2-7}
                                 & QFlow   & 500 & 16.732$_{\pm\text{0.089}}$ & 0.492$_{\pm\text{0.004}}$ & 0.036$_{\pm\text{0.000}}$ & 17.370$_{\pm\text{0.111}}$ \\
                                 &                            & 50  & 1.670$_{\pm\text{0.003}}$  & 0.048$_{\pm\text{0.000}}$ & 0.003$_{\pm\text{0.000}}$ & 1.776$_{\pm\text{0.004}}$  \\
                                 &                            & 20  & 0.653$_{\pm\text{0.001}}$  & 0.019$_{\pm\text{0.000}}$ & 0.001$_{\pm\text{0.000}}$ & 0.726$_{\pm\text{0.002}}$  \\
            \midrule
            \multirow{6}{*}{SCOPe} & FrameFlow & 500 & 11.947$_{\pm\text{0.125}}$ & 0.601$_{\pm\text{0.003}}$ & 0.033$_{\pm\text{0.000}}$ & 12.688$_{\pm\text{0.124}}$ \\
                                   &                            & 50  & 1.166$_{\pm\text{0.013}}$  & 0.059$_{\pm\text{0.001}}$ & 0.003$_{\pm\text{0.000}}$ & 1.275$_{\pm\text{0.016}}$  \\
                                   &                            & 20  & 0.471$_{\pm\text{0.002}}$  & 0.025$_{\pm\text{0.000}}$ & 0.001$_{\pm\text{0.000}}$ & 0.539$_{\pm\text{0.003}}$  \\
            \cmidrule(lr){2-7}
                                   & QFlow     & 500 & 11.994$_{\pm\text{0.037}}$ & 0.483$_{\pm\text{0.003}}$ & 0.034$_{\pm\text{0.000}}$ & 12.602$_{\pm\text{0.040}}$ \\
                                   &                            & 50  & 1.166$_{\pm\text{0.015}}$  & 0.048$_{\pm\text{0.001}}$ & 0.003$_{\pm\text{0.000}}$ & 1.262$_{\pm\text{0.021}}$  \\
                                   &                            & 20  & 0.466$_{\pm\text{0.002}}$  & 0.019$_{\pm\text{0.000}}$ & 0.001$_{\pm\text{0.000}}$ & 0.528$_{\pm\text{0.002}}$  \\
            \bottomrule
    \end{tabular}%
    \label{tab:speed}
    }
\end{table}

\subsection{Detailed Comparisons Based on SCOPe}\label{app:scope_details}
Table~\ref{tab:SCOPe table mini} reports the mean and standard deviation of the results corresponding to Table~\ref{tab:scope_statistics}. The checkpoints of ReFrameFlow and ReQFlow used here are selected from epochs 2 to 6 of rectification training for fair comparison. The checkpoints of FrameFlow and QFlow used for rectification are from epoch 189 and epoch 195, respectively.

Table~\ref{tab:SCOPe table} presents comprehensive results from the SCOPe experiment using a fine-grained step size corresponding to Figure~\ref{fig:SCOPe fig}. Note that the chekponts of ReFrameFlow and ReQFlow here are different from Table~\ref{tab:scope_statistics}. We select checkpoints following the criteria in Appendix~\ref{app:ckpt_select}. Even with a generation process as concise as 10 steps, ReQFlow achieves a designable fraction of 0.848. This highlights the efficiency and effectiveness of ReQFlow in generating feasible protein structures. Additionally, both QFlow and ReQFlow models produce proteins with reasonable secondary structure distributions, indicating their capability to generate structurally plausible proteins. These findings underscore the potential of these models to significantly advance the field of protein design by balancing computational efficiency with structural accuracy.

\begin{table}[t]
    \centering
    \vspace{-5pt}
    \caption{Comparisons for various models on SCOPe. 
    For each metric of generation quality, we indicate the best and top-3 results in the same way as Table~\ref{tab:PDB main results} does.
    The inference time corresponds to generating a backbone with length $N=128$. For each method, we evaluate the results on five checkpoints and compute their mean and standard deviation.}
    \small{%
    \tabcolsep=1pt
    \begin{tabular}{lcccccc}
            \toprule
            \multirow{2}{*}{Method} & \multicolumn{2}{c}{\textbf{Efficiency}} & \multicolumn{1}{c}{\textbf{Designability}} & \multicolumn{1}{c}{\textbf{Diversity}} & \multicolumn{1}{c}{\textbf{Novelty}}\\ 
            \cmidrule(lr){2-3} \cmidrule(lr){4-4} \cmidrule(lr){5-5} \cmidrule(lr){6-6}
            & Step
            & Time(s)
            & Fraction$\uparrow$
            & TM$\downarrow$
            & TM$\downarrow$\\
            \midrule
            FrameFlow & 500 & 12.69  & 0.851$_{\pm\text{0.016}}$ & 0.392$_{\pm\text{0.007}}$ &  0.714$_{\pm\text{0.003}}$\\
             & 50 & 1.28 & 0.811$_{\pm\text{0.017}}$ & \cellcolor{top1}\textbf{0.378$_{\pm\text{0.008}}$} & 0.682$_{\pm\text{0.004}}$ \\
             & 20 & 0.54 & 0.708$_{\pm\text{0.023}}$ & 0.370$_{\pm\text{0.005}}$ & 0.649$_{\pm\text{0.008}}$\\
             \midrule
            ReFrameFlow & 500 & 12.77 & \cellcolor{top2}0.924$_{\pm\text{0.007}}$ & 0.407$_{\pm\text{0.004}}$ & 0.709$_{\pm\text{0.004}}$ \\
             & 50 & 1.26 & 0.906$_{\pm\text{0.006}}$ & 0.407$_{\pm\text{0.001}}$ & 0.689$_{\pm\text{0.003}}$ \\
             & 20 & 0.52 & 0.884$_{\pm\text{0.009}}$  & 0.405$_{\pm\text{0.004}}$ & 0.714$_{\pm\text{0.003}}$ \\
             \midrule
            QFlow & 500 & 12.60  & 0.893$_{\pm\text{0.022}}$ & \cellcolor{top3}0.392$_{\pm\text{0.005}}$ & 0.707$_{\pm\text{0.011}}$\\
             & 50 & 1.26 & 0.854$_{\pm\text{0.019}}$ & \cellcolor{top2}0.380$_{\pm\text{0.005}}$ & \cellcolor{top2}0.675$_{\pm\text{0.014}}$ \\
             & 20 & 0.53 & 0.762$_{\pm\text{0.015}}$  &0.372$_{\pm\text{0.004}}$ & 0.644$_{\pm\text{0.013}}$ \\
             \midrule
            ReQFlow & 500 & 12.52 &\cellcolor{top1}\textbf{0.947$_{\pm\text{0.007}}$} & 0.406$_{\pm\text{0.003}}$ & 0.696$_{\pm\text{0.003}}$ \\
             & 50 & 1.30 & \cellcolor{top3}0.922$_{\pm\text{0.007}}$ & 0.411$_{\pm\text{0.002}}$ & \cellcolor{top3}0.680$_{\pm\text{0.007}}$\\ 
             & 20 & 0.53 &0.910$_{\pm\text{0.012}}$ & 0.405$_{\pm\text{0.001}}$ & \cellcolor{top1}\textbf{0.671$_{\pm\text{0.005}}$} \\
            \bottomrule
    \end{tabular}%
    }
    \label{tab:SCOPe table mini}
\end{table}

\begin{table*}[htb]
    \centering
    \vspace{-5pt}
    \caption{Unconditional protein backbone generation performance for 10 samples each length in $\{60, 61, \cdots, 128\}$. We report the metrics from Section~\ref{sec:metrics} and we indicate the best and top-3 results in the same way as Table~\ref{tab:PDB main results} does.}
    \small{
    \begin{tabular}{lccccccc}
        \toprule
        \multicolumn{1}{c}{} & \multirow{2}{*}{\textbf{Step}} & \multicolumn{2}{c}{\textbf{Designability}} & \multicolumn{1}{c}{\textbf{Diversity}} & \multicolumn{1}{c}{\textbf{Novelty}} & \multicolumn{2}{c}{\textbf{Sec. Struct.}}\\
         \cmidrule(lr){3-4} \cmidrule(lr){5-5} \cmidrule(lr){6-6} \cmidrule(lr){7-8}
        & 
        & Fraction ($\uparrow$)
        & scRMSD ($\downarrow$)
        & TM ($\downarrow$)
        & TM ($\downarrow$)
        & Helix
        & Strand\\
        \midrule
        Scope Dataset  & - & - & - & - & - & 0.330 & 0.260 \\
        \midrule
        FrameFlow & 500   & 0.849 & 1.448$_{\pm\text{1.114}}$ & 0.397 &0.713 & 0.439 &0.236 \\
         & 400  &0.864 &1.353$_{\pm\text{0.890}}$ &0.390 & 0.713 &0.452 &0.229 \\
         & 300  & 0.861&1.422$_{\pm\text{1.178}}$  &0.389 & 0.715 &0.449 &0.230 \\
         & 200 &0.842  &1.496$_{\pm\text{1.411}}$  &\cellcolor{top3}0.387 & 0.704 &0.437 &0.237 \\
         & 100  & 0.823 &1.517$_{\pm\text{1.228}}$  &0.388 & 0.697 &0.426 &0.238 \\
         & 50  & 0.820 & 1.546$_{\pm\text{1.316}}$ &\cellcolor{top2}0.379 &0.685 &0.441 &0.228 \\
         & 20  & 0.713 & 1.918$_{\pm\text{1.495}}$  &0.362 & 0.656&0.416 &0.219 \\
         & 10  &0.504 &2.924$_{\pm\text{2.362}}$ &0.381 & 0.626 &0.363 &0.213 \\
         \midrule
        ReFrameFlow & 500  &0.897 &1.368$_{\pm\text{1.412}}$ &0.403 & 0.700 &0.501 &0.187 \\
         & 400 &0.893 &1.328$_{\pm\text{0.763}}$ &0.405 & 0.698 &0.489 &0.202 \\
         & 300 &0.888 &1.313$_{\pm\text{0.686}}$ &0.405 & 0.697&0.485 &0.199 \\
         & 200 &0.907 &1.326$_{\pm\text{0.761}}$ &0.408 & 0.689 &0.482 &0.206 \\
         & 100 &0.886 &1.322$_{\pm\text{0.804}}$&0.410 & 0.690 &0.499 &0.201 \\
         & 50 &0.903 &1.291$_{\pm\text{0.763}}$ &0.404 & 0.685 &0.504 &0.202 \\
         & 20 &0.871 &1.416$_{\pm\text{0.880}}$  &0.406 & 0.675 &0.528 &0.190 \\
         & 10 &0.806 &1.696$_{\pm\text{1.093}}$ &0.390 & \cellcolor{top1}\textbf{0.650} &0.496 &0.192 \\
        \midrule
        QFlow & 500  & 0.907 & 1.263$_{\pm\text{1.334}}$ & 0.389 & 0.712 & 0.498 & 0.214 \\
         & 400   & 0.907 & 1.199$_{\pm\text{0.847}}$&0.394 & 0.711 &0.476 &0.223\\
         & 300 & 0.910 &1.243$_{\pm\text{1.027}}$ &0.393 & 0.710 &0.503 &0.209 \\
         & 200  & 0.877 &1.309$_{\pm\text{1.208}}$ &0.389 & 0.714 &0.481 &0.224 \\
         & 100  & 0.903 &1.283$_{\pm\text{1.027}}$ &0.390 & 0.702 &0.476 &0.225 \\
         & 50  & 0.872 &1.389$_{\pm\text{1.314}}$ &\cellcolor{top1}\textbf{0.371} & 0.674 &0.491 &0.206 \\
         & 20 & 0.764 &1.764$_{\pm\text{1.529}}$ &0.367 & 0.646 &0.492 &0.192 \\
         & 10 & 0.565 &2.589$_{\pm\text{2.216}}$ &0.374 & 0.614 &0.467 &0.167 \\
         \midrule
        ReQFlow & 500 &\cellcolor{top1}\textbf{0.972} & \cellcolor{top1}\textbf{1.043$_{\pm\text{0.416}}$} &0.418 & 0.703 &0.507 &0.228 \\
         & 400  &\cellcolor{top2}0.962 &\cellcolor{top2}1.050$_{\pm\text{0.445}}$  &0.417 & 0.697 &0.523 &0.212 \\
         & 300 &\cellcolor{top2}0.962 &\cellcolor{top3}1.076$_{\pm\text{0.518}}$  &0.421 &0.702 &0.498 &0.233 \\
         & 200 &0.948 &1.084$_{\pm\text{0.509}}$ &0.407 & 0.696 &0.513 &0.218 \\
         & 100  &0.933&1.123$_{\pm\text{0.669}}$ &0.425 & 0.695 &0.514 & 0.310\\
         & 50  & 0.932&1.162$_{\pm\text{0.812}}$ & 0.422 &  0.693 &0.491 &0.237 \\
         & 20 &0.929 & 1.214$_{\pm\text{0.633}}$&0.409& \cellcolor{top3}0.670 &0.514 & 0.307\\
         & 10  & 0.848 & 1.546$_{\pm\text{0.944}}$ &0.416 & \cellcolor{top2}0.662 &0.518 &0.195 \\
        \bottomrule
    \end{tabular}
    \label{tab:SCOPe table}
    }
\end{table*}

\begin{figure}[t]
    \centering
    \includegraphics[width=0.95\linewidth]{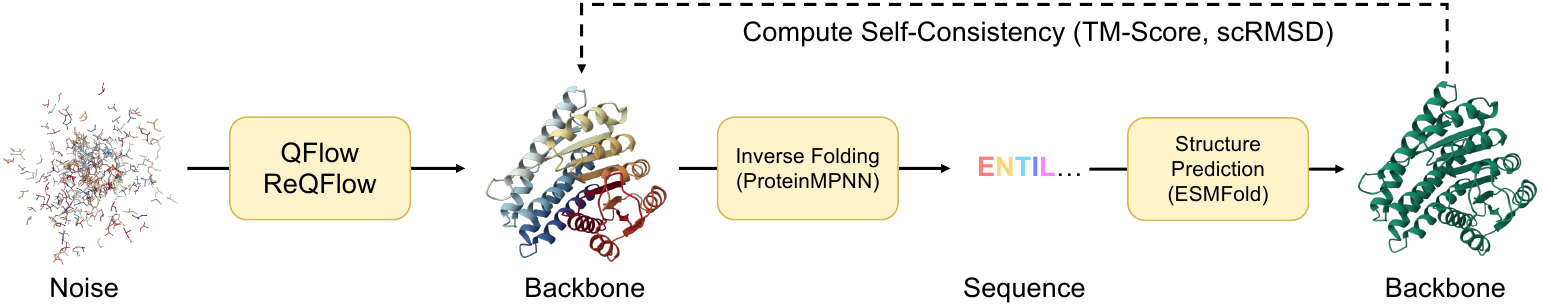}
    \caption{Illustration of the designability computation pipeline.}
    \label{fig:designability_compute}
\end{figure}

\begin{table}[t]
    \centering
    \caption{Model Sizes and Training Dataset Sizes}
    \small
    \begin{threeparttable}
    \begin{tabular}{lcc}
        \toprule
        \textbf{Model} & \textbf{Training Dataset Size} & \textbf{Model Size (M)} \\
        \midrule
        RFDiffusion & $>$208K & 59.8 \\
        Genie2 & 590K & 15.7 \\
        FrameDiff & 23K & 16.7 \\
        FoldFlow(Base,OT,SFM) & 23K & 17.5 \\
        FoldFlow2 & $\sim$160K & 672 \\
        FrameFlow & 23K & 16.7 \\
        \midrule
        QFlow & 23K & 16.7 \\
        ReQFlow & 23K+7K & 16.7 \\
        \bottomrule
    \end{tabular}
    \begin{tablenotes}
    \item[1] When training ReQFlow, we first apply the 23K samples of PDB to train QFlow, and then we use additional 7K samples generated by QFlow in the flow rectification phase.
    \end{tablenotes}
    \end{threeparttable}
    \label{tab:model_and_dataset_sizes}
\end{table}

\subsection{Comparisons on Model Size and Training Data Size}
The comparison of model size and training dataset size is listed in Table~\ref{tab:model_and_dataset_sizes}. Model sizes in the table refer to the number of \textit{total} parameter. FoldFlow2 utilizes a pre-trained model, thus having 672M parameters in total. The number of trainable parameters is 21M.

\subsection{Visualization Results}
We use \textit{Mol Viewer}~\cite{sehnal2021mol} to visualize protein structures generated by different models, as shown in Figure~\ref{fig:methods visualization 1} and Figure~\ref{fig:methods visualization 2}. 
In Figure~\ref{fig:methods visualization 1}, all proteins originate from the same noise initialization generated by QFlow, whereas in Figure~\ref{fig:methods visualization 2}, the initialization is generated by FoldFlow. 
Each method follows its own denoising trajectory, leading to distinct structural outputs. 
FoldFlow2 adopts a default sampling step of 50, while all other methods use 500 steps. Due to architectural differences, the final structures vary across models, but within the same model, different sampling steps generally yield similar structures. 

Among all models, ReQFlow exhibits the most stable and robust performance, maintaining low RMSD and variance across different sampling steps while demonstrating resilience to varying noise inputs. In contrast, other methods show significant limitations.
FoldFlow-OT is highly sensitive to initial noise, displaying drastically different performance in Figure~\ref{fig:methods visualization 1} and Figure~\ref{fig:methods visualization 2}—evidenced by substantial variance across sampling steps when using QFlow noise. 

Moreover, FoldFlow-OT tends to overproduce $\alpha$-helices—coiled, spiral-like structures—resulting in high designability scores but deviating from realistic protein distributions. This pattern suggests a high risk of mode collapse, where the model predominantly learns a specific subset of protein structures, thereby lacking diversity and novelty in its predictions. Conversely, ReQFlow and QFlow generate a higher proportion of $\beta$-strands, which appear as extended, ribbon-like structures, indicating a closer alignment with natural protein distributions.

Furthermore, as sampling steps decrease, most baseline models experience a sharp deterioration in quality: RMSD values increase, rendering the structures non-designable. In extreme cases, some samples exhibit severe fragmentation or disconnected backbones (e.g., the dashed regions in FoldFlow2 at 20 steps, Figure~\ref{fig:methods visualization 1}), highlighting instabilities in their sampling dynamics.


\begin{figure}[t]
    \centering
    \includegraphics[width=\linewidth, trim=0 150 0 50, clip]{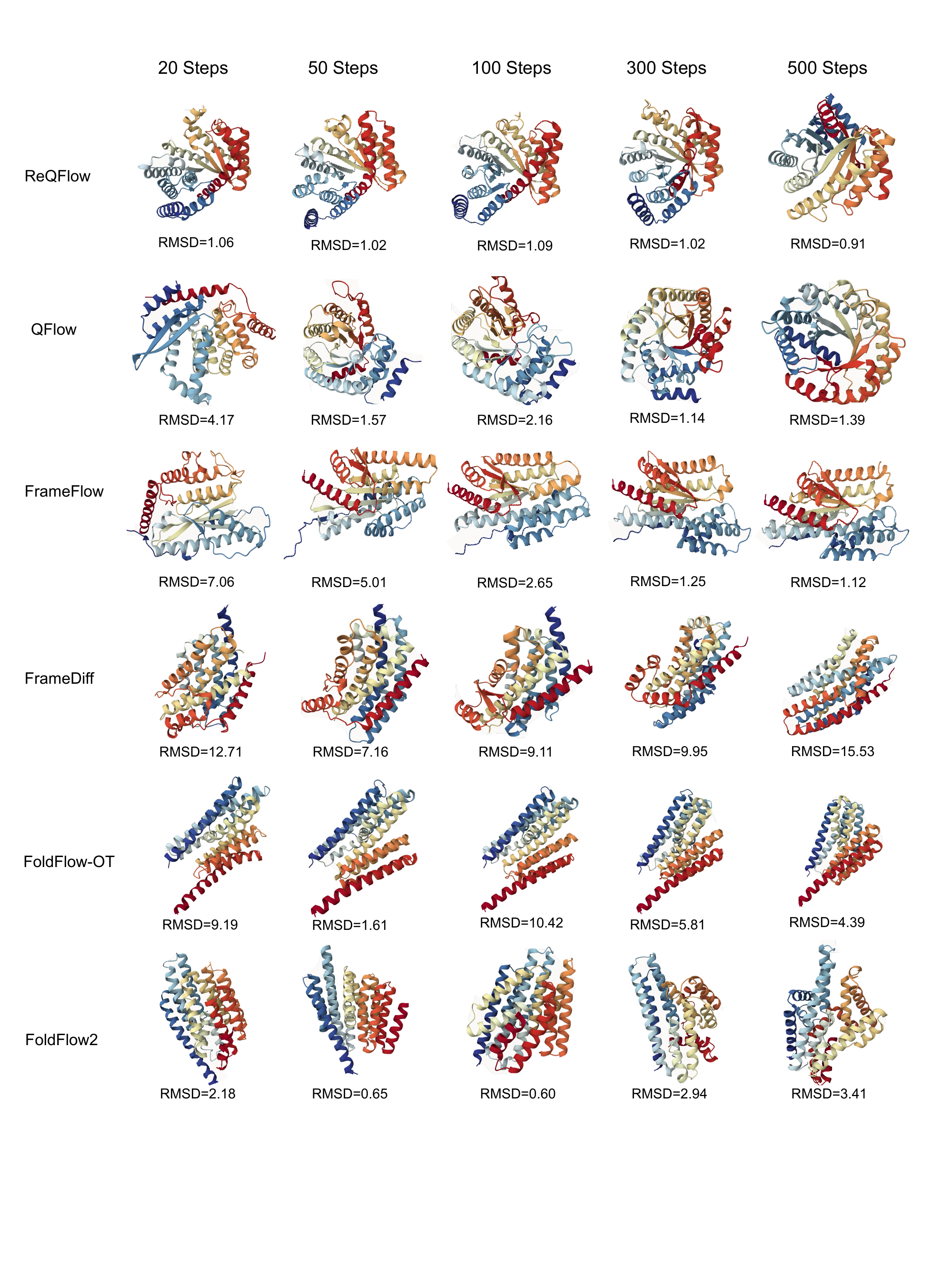}
    \caption{Visualization of different methods on length 300. Sampling start with a \textit{same} noise generated by QFlow.}
    \label{fig:methods visualization 1}
\end{figure}

\begin{figure}[t]
    \centering
    \includegraphics[width=\linewidth, trim=0 150 0 50, clip]{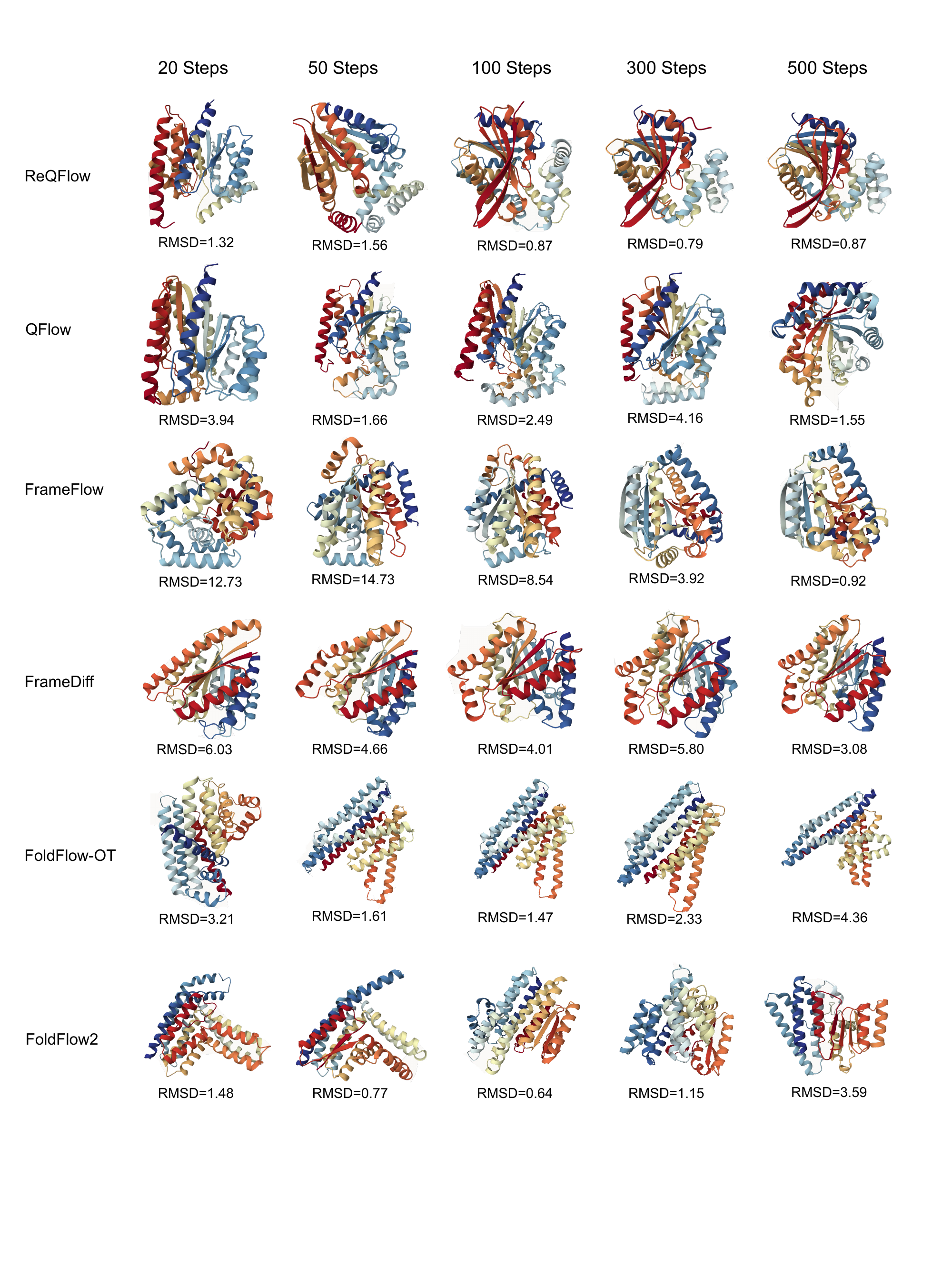}
    \caption{Visualization of different methods on length 300. Sampling start with a \textit{same} noise generated by FoldFlow.}
    \label{fig:methods visualization 2}
\end{figure}

\begin{table}[htb]
    \centering
    \caption{Comparisions for various methods on their performance (Fraction Score) in long backbone generation. The lengths of the generated backbones range from 300 to 600. We generate 50 samples for each length. We bold the best result and show the top-3 results with a blue background.}
    \small  
  \begin{tabular}{l|ccccccc}
    \toprule
    Length $N$ & 300 & 350 & 400 & 450 & 500 & 550 & 600 \\
    \midrule
    RFDiffusion & 0.76 & 0.70 & 0.46 & 0.36 & 0.20 & \cellcolor{top3}0.20 & \cellcolor{top2}0.10 \\
    Genie2 & 0.86 & \cellcolor{top2}0.90 & \cellcolor{top2}0.74 & \cellcolor{top2}0.58 & 0.28 & 0.12 & \cellcolor{top2}0.10 \\
    FoldFlow2 & \cellcolor{top2}0.96 & \cellcolor{top3}0.88 & \cellcolor{top3}0.70 & \cellcolor{top3}0.56 & \cellcolor{top2}0.60 & \cellcolor{top2}0.26 & \cellcolor{top1}\textbf{0.16} \\
    FrameDiff & 0.24 & 0.18 & 0.00 & 0.00 & 0.00 & 0.00 & 0.00 \\
    FoldFlow-OT & 0.62 & 0.48 & 0.30 & 0.10 & 0.04 & 0.00 & 0.00 \\
    FrameFlow & 0.72 & 0.74 & 0.48 & 0.28 & 0.24 & 0.10 & 0.00 \\
    \midrule
    QFlow & \cellcolor{top3}0.88 & 0.78 & 0.54 & 0.50 & \cellcolor{top3}0.30 & 0.02 & 0.00 \\
    ReQFlow & \cellcolor{top1}\textbf{0.98} & \cellcolor{top1}\textbf{0.96} & \cellcolor{top1}\textbf{0.78} & \cellcolor{top1}\textbf{0.76} & \cellcolor{top1}\textbf{0.70} & \cellcolor{top1}\textbf{0.56} & \cellcolor{top2}0.10 \\
    \bottomrule
  \end{tabular}
    \label{tab:longchain}
\end{table}

\end{document}